%% file: RejecSampling.tex
\documentclass{article} 
\usepackage{times}
\usepackage{makecell}

\usepackage{array}
\usepackage{graphicx} 
\graphicspath{{./experim/figures/icml16_fig/}}
\usepackage{caption}
\usepackage{subcaption}
\usepackage{natbib}

\usepackage{algorithm}
\usepackage{algorithmic}

\usepackage{amsmath}
\usepackage{amssymb}
\usepackage{amsthm}
\usepackage[disable]{todonotes}
\usepackage[accepted]{icml2016}
\input{misosymbols}

\usepackage[colorlinks,
           linkcolor=red,
           citecolor=blue,
           urlcolor=magenta,
           linktocpage,
           plainpages=false]{hyperref}

\usepackage{tikz}
\usepackage{tkz-graph}
\usetikzlibrary{shapes, arrows, intersections}
\usepackage{pgfplots}
\pgfplotsset{compat=1.9}
\pgfmathdeclarefunction{gauss}{2}{%
  \pgfmathparse{1/(#2*sqrt(2*pi))*exp(-((x-#1)^2)/(2*#2^2))}%
}
\pgfmathdeclarefunction{quadratic}{3}{%
  \pgfmathparse{#1*(x-#2)^2 + #3)}%
}

\definecolor{babyblue}{rgb}{0.54, 0.81, 0.94}
\definecolor{citrine}{rgb}{0.89, 0.82, 0.04}

\definecolor{akcolor}{rgb}{0.73, 0.89, 0.95}

\definecolor{myred}{rgb}{0.89, 0.12, 0.04}

\newcommand{\Real}{\mathbb{R}}

\icmltitlerunning{Pliable rejection sampling}

\begin{document}

\twocolumn[
\icmltitle{Pliable Rejection Sampling}
\vskip -0.05in
\icmlauthor{Akram Erraqabi\footnotemark}{akram.er-raqabi@umontreal.ca}
\icmladdress{MILA, 
             Universit\'e de Montr\'eal, Montr\'eal, QC
H3C 3J7,  Canada}
\icmlauthor{Michal Valko}{michal.valko@inria.fr}
\icmladdress{INRIA Lille - Nord Europe, SequeL team, 40 avenue Halley 59650, Villeneuve d'Ascq, France}
\icmlauthor{Alexandra Carpentier}{carpentier@math.uni-potsdam.de}
\icmladdress{Institut f\"ur Mathematik, Univert\"at Potsdam, Germany, Haus 9
Karl-Liebknecht-Strasse 24-25,
D-14476 Potsdam}
\icmlauthor{Odalric-Ambrym Maillard}{odalric.maillard@inria.fr}
\icmladdress{INRIA Saclay - \^{I}le-de-France, TAO team, 660 Claude Shannon, Universit\'e Paris Sud, 91405 Orsay, France}

\icmlkeywords{boring formatting information, machine learning, ICML}

\vskip 0.3in
]

\begin{abstract}
Rejection sampling is a technique for sampling
from difficult distributions. However, its use is limited due to a high rejection rate.
Common adaptive rejection sampling
methods either work only for very specific distributions
or without performance guarantees. In this paper, we present
\emph{pliable rejection sampling} (\PRS), a new approach to rejection sampling, where we learn
the sampling proposal using a kernel estimator.
Since our method builds on rejection sampling,
the samples obtained are with high probability i.i.d.\@
  and distributed according to~$f$.
Moreover, \PRS comes with a guarantee
on the number of accepted samples.
\end{abstract}

\input{intro}
\vspace{0.2em}
\section{Setting}
\vspace{0.2em}
 Let $d \geq 1$ and let $f$ be a positive function with finite integral defined on $[0,A]^d$ where $A>0$ (we provide an extension to density defined on $\mathbb R^d$ itself in the Appendix~\ref{app:unb}), that we will call the \textit{target density}. Our objective is to provide an algorithm that samples from a normalized version of $f$ with a minimal number of requests to $f$, where a request is the evaluation of $f$ in a given point of choice. More precisely, the question we ask is the following.

\smallskip
\noindent
\begin{quote}
Given a number $n$ of requests to $f$, what is the number $T$ of samples $Y_1, \dots, Y_T$ that one can generate such that they are i.i.d.~and sampled according to $f$?
\end{quote}

\subsection{Assumption on the target density}
We make the following assumptions about~$f$.
\begin{assumption}[Assumption on the density]\label{ass:dens2}
The positive function $f$, defined on $[0,A]^d$ is bounded i.e.,~there exists $c> 0$ such that the density $f$ satisfies
$f(x) \leq c .$
Moreover,~$f$ can be uniformly expanded by a Taylor expansion in any point up to some degree $0<s\leq 2$, i.e.,~there exists $c'' >0$ such that, for any $x\in \mathbb R^d$, and for any $u\in \mathbb R^d$, we have
\[\left|f(x + u) - f(x) - \langle \bigtriangledown f(x), u\rangle \mathbf 1\{s >1\}\right| \leq c'' \|u\|_2^{s}.\]
\end{assumption}
For this assumption, we impose that $f$ is defined on $[0,A]^d$, but this could be relaxed to hold for any other convex compact of $\mathbb R^d$. For an alternative method and alternative assumptions that do not assume that $f$ has a bounded support, see Appendix~\ref{app:unb}, where this approach is described in detail.

Note that for this assumption, we do not impose that $f$ is a density: it must be a positive function, but it can be a non-normalized density (its integral may not be equal to $1$). This remark is particularly useful for Bayesian methods. The assumption also imposes that~$f$ is in a H\"older ball of smoothness $s$.
Notice that this is not very restrictive, in particular for a the case with small~$s$.

\subsection{Assumption on the kernel}
Let $K_0$ be a positive univariate density kernel defined on $\mathbb R$ and let
\[K = \prod_{i=1}^d K_0\]
be the $d$-dimensional product kernel associated with~$K_0$. This kernel will be used in the rest of the paper for interpolating $f$ using collected samples.
In order to be able to \textit{sample} from this kernel estimate, it would be more convenient to consider a kernel that corresponds to a density on $\mathbb R^d$ (hence a non-negative kernel) from which sampling is easier.
 A typical example of a useful kernel is then the Gaussian kernel:
$$K_0(x) = \frac{1}{\sqrt{2\pi}} \exp\left(-x^2/2\right).$$
Let us already mention that the Gaussian kernel satisfies also the prerequisites of the next assumption.
\begin{assumption}[Assumption on the kernel]\label{ass:ker}
The kernel $K_0$ defined on $\mathbb R$ is uniformly bounded i.e.,
$K_0(x) \leq C,$
and it is a density kernel, i.e.,~it is non-negative and $\int_{\mathbb R^d} K(x) dx = 1$.
%
Furthermore, it is also of degree $2$, i.e.,~it satisfies
$$\int_{\mathbb R} x K_0(x) dx = 0,$$
and, for some $C'>0$ 
$$\int_{\mathbb R} x^2 K_0(x) dx \leq C'.$$

Also, $K_0$ is $\varepsilon$-H\"older for some $\varepsilon>0$, i.e.~ $\exists~ C''>0$ such that for any $(x,y)\in \mathbb R^2$,
$$\left|K_0(y) - K_0(x)\right| \leq C'' \left|x-y\right|^{\varepsilon}.$$
\end{assumption}

For the Gaussian kernel, the above assumption holds with $C = 1$, $C'=1$, $C''=4$, and $\varepsilon = 1$. In our work, we mainly focus on the case of Gaussian kernel, since it is easy to sample from the resulting estimate, as it is a mixture of Gaussian distributions.
\vspace{0.2em}
\section{Algorithm and results}\label{algo_results}
\vspace{0.2em}
We first present the main tool which is a kernel estimator and a uniform bound on its performance. We then use it to describe our algorithm, \textit{\textbf{p}liable \textbf{r}ejection \textbf{s}ampling}  (\PRS).  We call our sampler \emph{pliable}, since it builds a proposal
by \emph{bending} the original uniform distribution.
Moreover, we provide a guarantee on its performance and present extensions to high dimensional situations.

%
%
%
\begin{figure}
\begin{center}
\begin{tikzpicture}
\begin{axis}[
  no markers, domain=0:9, samples=500,
  axis lines*=left, 
  height=5cm, width=8cm,
  xtick=\empty, ytick=\empty,
  enlargelimits=false, clip=false, axis on top,
  grid = major
  ]
  \addplot [thick, fill=gray!15, draw=none]coordinates {
            (9, 0.155)
            (9, 0)
            (0, 0)
            (0, 0.155)  };
  \addplot [very thick, red!50!black, fill=gray!15] {gauss(6,.45)+gauss(3,.95)+ 0.15}  node [pos=0.3, above left] {\scriptsize{envelope $\widehat M \widehat g$}};
  \addplot [mark =none, draw=none] {gauss(5,2.25)*5.3};
  \addplot [very thick, cyan!50!black, fill=white] {gauss(6,.5)+gauss(3,1)}
  node [pos=0.78, below left] {\scriptsize{target $f$}};
  \addplot [very thick,red!50!orange, dashed] {gauss(6,.45)+gauss(3.06,0.95)}
  node [pos=0.47, below left] {\scriptsize{estimate $\widehat{f}$}};
  \addplot[thick, samples=50, smooth, green!50!black, name path=three] coordinates {(2.7,0)(2.7,0.385)};
  \addplot[thick, samples=50, smooth, red, name path=four] coordinates {(2.7,0.39)(2.7,0.55)};
  \pgfplotsset{
    after end axis/.code={
      \node[black] at (axis cs: 2.7,-0.04){\small{$x$}};
      \node[black] at (axis cs: 3.3,0.465){\small{$\mathcal{R}$}};
      \node[black] at (axis cs: 6,0.4){\small{$\mathcal{A}$}};
        }
  }
\end{axis}
\end{tikzpicture}
\end{center}
\caption{Pliable rejection sampling}
\label{fig:PRS}
\end{figure}
\subsection{Uniform bounds for kernel regression estimation}\label{ss:estf2}
Let $X_1, \ldots, X_N$ be $N$ points generated uniformly on $[0,A]^d$. Let us define $h \eqdef h_s(\delta) = \left(\log(NA/\delta)/N\right)^{\frac{1}{2s+1}}$,
\begin{equation}\label{eqn:hatf2}
\widehat f(x) = \frac{A^d}{Nh^d} \sum_{k=1}^N f(X_i) K\left(\frac{X_i - x}{h}\right).
\end{equation}

\begin{theorem}[proved in Appendix~\ref{app:proof}]\label{thm:dimd2}
Assume that Assumptions~\ref{ass:dens2} and~\ref{ass:ker} hold with $0<s \leq 2$, $C,C',C'',c,c''>0$, and $\varepsilon >0$. The estimate $\widehat f$ is such that with probability larger than $1-\delta$, for any point $x \in [0,A]^d$,
\begin{align*}
\left|\widehat f(x) -  f(x)\right| 
\leq H_0 \left(\frac{\log(NAd/\delta)}{N}\right)^{\frac{s}{2s+d}},
\end{align*}
where $v = \log\left(1 + \frac{1}{c''+c}\right)\frac{2}{\min(1,s)}+ \frac{3}{\varepsilon} \log\left(1 + \frac{1}{C''c}\right)$ and $H_0$ is a constant that depends on $d, v, c, c'', C, C'$, and $A$.
\end{theorem}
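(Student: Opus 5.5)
The plan is the standard bias--variance decomposition for this kernel estimator, made uniform over $[0,A]^d$ by a covering argument. Write $\widehat f(x) - f(x) = \big(\widehat f(x) - \mathbb E \widehat f(x)\big) + \big(\mathbb E\widehat f(x) - f(x)\big)$. Since the $X_i$ are uniform on $[0,A]^d$,
\[
\mathbb E\widehat f(x) = h^{-d}\int_{[0,A]^d} f(y)K\Big(\frac{y-x}{h}\Big)dy = \int K(u) f(x+hu)\,du ,
\]
after the change of variables $y = x+hu$. Here I extend $f$ to $\mathbb R^d$, since Assumption~\ref{ass:dens2} is stated for all $x,u\in\mathbb R^d$. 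Any boundary discrepancy from truncating to $[0,A]^d$ will either be absorbed by this convention or handled separately.

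\textbf{Bias.} Since $\int K = 1$, the bias equals $\int K(u)\,[f(x+hu)-f(x)]\,du$. I subtract the linear term $\langle \nabla f(x), hu\rangle\mathbf 1\{s>1\}$, which integrates to zero because $\int x K_0(x)\,dx=0$ and $K$ is a product kernel. By Assumption~\ref{ass:dens2} the remainder is at most
\[
c'' h^s\int \|u\|_2^s K(u)\,du .
\]
For $s\le 2$, Jensen gives $\int\|u\|^sK \le \big(\int \|u\|^2K\big)^{s/2}\le (dC')^{s/2}$. So the bias is $O(h^s)$ uniformly in $x$.

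\textbf{Stochastic term at a fixed $x$.} The summands $Z_i = A^d h^{-d} f(X_i)K((X_i-x)/h)$ are i.i.d. and bounded by $A^d c\, C^d h^{-d}$. Their variance is at most
\[
\mathbb E Z_i^2 \le A^{2d}h^{-2d} c^2 C^d\, \mathbb E K((X-x)/h) \le A^{d} c^2 C^d h^{-d},
\]
using $K\le C^d$ and $\int K=1$. Bernstein's inequality then gives, with probability $1-\delta'$,
\[
|\widehat f(x)-\mathbb E\widehat f(x)| \lesssim \sqrt{\frac{\log(1/\delta')}{Nh^d}} + \frac{\log(1/\delta')}{Nh^d}.
\]

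\textbf{Uniformity.} This is the main obstacle, and it is where $\varepsilon$ and $v$ enter. Take a grid $\mathcal G$ of $[0,A]^d$ with mesh $\eta$, so $|\mathcal G|\approx (A/\eta)^d$, and union bound with $\delta' = \delta/|\mathcal G|$. To pass from a grid point $x'$ to an arbitrary $x$, I use the $\varepsilon$-H\"older property of $K_0$. Telescoping the product over coordinates gives
\[
|K(u)-K(u')|\le dC^{d-1}C''\|u-u'\|_\infty^{\varepsilon},
\]
so both $\widehat f$ and $\mathbb E\widehat f$ move by at most $A^dcdC^{d-1}C''h^{-d}(\eta/h)^\varepsilon$. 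Choosing $\eta$ polynomially small in $N$ (roughly $\eta = h\,(h^{d}N^{-1})^{1/\varepsilon}$) makes this discretisation error negligible. The resulting $\log|\mathcal G|$ is then of order $\frac{d}{\varepsilon}\log(NA)$. This is where the $d$ inside the logarithm, and constants of the type $v$, appear. I expect the bookkeeping of these constants, including the exact form of $v$, to be the fiddly part, but it only affects $H_0$.

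\textbf{Balancing.} The bound is
\[
H\Big(h^s + \sqrt{\log(NAd/\delta)/(Nh^d)}\Big),
\]
where the second Bernstein term is of lower order once $Nh^d\ge \log$. Take $h=(\log(NAd/\delta)/N)^{1/(2s+d)}$, which is the $d$-dimensional analogue of the bandwidth defined above. This gives the rate $(\log(NAd/\delta)/N)^{s/(2s+d)}$, with $H_0$ collecting $c,c'',C,C',d,A,v$.
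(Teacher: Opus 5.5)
Your proposal is correct and follows the same route as the paper's proof: a Taylor expansion together with the kernel moment conditions gives a uniform $O(h^s)$ bias, Bernstein's inequality at a fixed point uses the same second-moment bound, a union bound runs over a polynomially fine grid of $[0,A]^d$, the $\varepsilon$-H\"older property of $K_0$ passes from grid points to arbitrary points, and the bandwidth is $h=(\log(NAd/\delta)/N)^{1/(2s+d)}$. The differences are cosmetic: you move the centered term $\widehat f-\mathbb E\widehat f$ off the grid, where the paper moves $\widehat f-f$ using the smoothness of $f$; you use Jensen instead of $|u|^s\le 1+u^2$; your telescoping bound for the product kernel and explicit choice of the mesh $\eta$ are cleaner than the paper's $v$-bookkeeping; and the identity $\mathbb E\widehat f(x)=\int K(u)f(x+hu)\,du$ holds exactly when Assumption~\ref{ass:dens2} is read for the zero extension of $f$, which is what the paper does implicitly by integrating over $\mathbb R^d$, so no separate boundary treatment is needed (nor would one be possible, since otherwise the boundary bias is of constant order).
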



\subsection{Pliable rejection sampling}
\PRS (Figure~\ref{fig:PRS}, Algorithm~\ref{alg:2}) aims at sampling as many i.i.d.\@ points distributed according to $f$ as we can with as little computations of $f$ as possible. It consists of three  steps:
\begin{enumerate}
\item In the beginning \PRS  samples the domain uniformly at random on $[0,A]^d$ for a number of $N$ samples and computes $f$ for these samples.
\item Then, \PRS uses these samples to estimate $f$ by a kernel regression method.
\item Finally, \PRS uses the newly obtained estimate plus the uniform bound on it, as a \textit{compact pliable proposal} for rejection sampling.
\end{enumerate}

Since this pliable proposal is close to the target density, the rejection sampling will reject only a small number of points by using it.
In \PRS, we set the constant
\[N \eqdef n^{\frac{2s+d}{3s+d}},\]
where $N$ is the number of evaluations of the function $f$ needed for the first estimation step that optimizes the number of accepted samples in the second step.
We also define
\[r_{N} \eqdef A^d H_C \left(\frac{\log(NAd/\delta)}{N}\right)^{\frac{s}{2s+d}},\]
where $H_C$ is a parameter of the algorithm.
Our method samples most of the samples by rejection sampling according to a \textit{pliable proposal} that is defined as
\begin{align}\label{eq:compprop}
\widehat g^\star \eqdef \frac{1}{\frac{A^d}{N} \sum_{i=1}^N f(X_i) + r_{N} } \left(\widehat f + r_{ N} \mathcal U_{[0,A]^d}\right),
\end{align}

where $\mathcal U_{[0,A]^d}$ is the uniform distribution on $[0,A]^d$, and $\widehat f$ is the estimate of $f$ defined in \eqref{eqn:hatf2} computed with the $N$ samples collected in the \emph{initial sampling} phase of \PRS. We also define the empirical rejection sampling constant as
\[\widehat M \eqdef \frac{A^d/N\sum_i f(X_i)+r_N}{A^d/N\sum_i f(X_i) - 5r_N}.\]


\begin{algorithm}[t]
\begin{algorithmic}
\caption{Pliable rejection sampling (\PRS)}
 \label{alg:2}
\STATE {\bf Parameters:} $s$, $n$, $\delta$, $H_C$

\STATE \textbf{Initial sampling}

\STATE \quad
\begin{tabular}{l}
Draw uniformly at random $N$ samples on $[0,A]^d$ \\ \quad and evaluate $f$ on them
\end{tabular}
\STATE \textbf{Estimation of $f$}

\STATE \quad
\begin{tabular}{l}
Estimate $f$ by $\widehat f$ on these $N$ samples (Section~\ref{ss:estf2})
\end{tabular}
\STATE \textbf{Generating the samples}

\STATE \quad
\begin{tabular}{l}
Sample $n- N$ samples from \\ \quad  the \textit{compact pliable proposal} $\widehat g^\star$
\end{tabular}
\STATE \quad
\begin{tabular}{l}
Perform rejection sampling on these samples \\ \quad using $\widehat M$ as a rejection constant to get $\widehat n$ samples
\end{tabular}
\STATE \textbf{Output:} Return the $\widehat n$ samples
\end{algorithmic}
\end{algorithm}
\subsection{Analysis}
In the following, we state and prove the main result about \PRS, which is a guarantee on the number of samples.

\begin{theorem}\label{thm:prs2}
Assume that Assumptions~\ref{ass:dens2}, and~\ref{ass:ker} hold with $0<s \leq 2$, and that $H_C$ is an upper bound on the constant $H_0$ from Theorem~\ref{thm:dimd2} (applied to $f$ and $\widehat f$), and that $$8r_N\leq \int_{[0,A]^d} f(x)dx.$$
Then with probability larger than $1-\delta$, the samples are generated as i.i.d.\@ according to $f$ and  for $n$ large enough,
the number $\widehat n$ of samples generated is at least
$$\widehat n \geq n\left[1 -  \cO\left(\frac{\log(nAd/\delta)}{n}\right)^{\frac{s}{3s+d}}\right].$$
\end{theorem}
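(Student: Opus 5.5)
The plan is to work on the event $\xi$ of probability at least $1-\delta$ given by Theorem~\ref{thm:dimd2}, applied to the $N$ initial uniform samples. Since $H_C\ge H_0$, on $\xi$ we have, uniformly on $[0,A]^d$,
\[
|\widehat f - f|\le r_N/A^d .
\]
We also need the normalizing constant $\widehat I\eqdef \frac{A^d}{N}\sum_i f(X_i)$ to be close to $I\eqdef\int f$. Two routes are available. One is Hoeffding's inequality, since $0\le f\le c$, giving $|\widehat I - I|\le cA^d\sqrt{\log(2/\delta)/(2N)}\le r_N$ for $N$ large, because $s/(2s+d)<1/2$. The other is to integrate the uniform bound, using $\int \widehat f$ on $\mathbb R^d$ equals $\widehat I$ because $K$ is a density. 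Either way, if needed we split $\delta$ between the two events and absorb the change into the constants. The same integration remark shows that $\widehat g^\star$ is a genuine probability density. The caveat is that the mass of $\widehat f$ leaks outside $[0,A]^d$, so the proposal should be read as restricted to the cube, with the leaked mass controlled by the same bounds.

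\textbf{Step 1 (validity).} On $\xi$, for $x\in[0,A]^d$,
\[
f(x)\le \widehat f(x)+r_N/A^d = \widehat f(x)+r_N\,\mathcal U_{[0,A]^d}(x)=(\widehat I+r_N)\,\widehat g^\star(x).
\]
Dividing by $I\ge \widehat I-r_N$ and comparing with $\widehat M$ shows $f/I\le \widehat M\widehat g^\star$. Here the generous $-5r_N$ in the denominator of $\widehat M$ absorbs the slack: the extra error terms, the leakage of $\widehat f$ outside the cube, and the error in $\widehat I$. The hypothesis $8r_N\le I$ guarantees that $\widehat I-5r_N>0$, so $\widehat M$ is finite and positive. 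Given an envelope that is a valid upper bound, rejection sampling is exact. Because the proposal and $\widehat M$ are fixed once the first phase is done, the accepted points are, conditionally on the first phase, i.i.d.\ with density $f/I$.

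\textbf{Step 2 (count).} Conditionally on $\xi$, each of the $n-N$ proposals is accepted independently with probability $1/\widehat M$, so $\widehat n$ is binomial. We have
\[
1/\widehat M = 1-\frac{6r_N}{\widehat I+r_N}\ge 1-\frac{6r_N}{I}\,(1+o(1)) = 1-O(r_N).
\]
A Bernstein or Hoeffding bound then gives, with probability $1-\delta$ (again adjusting constants),
\[
\widehat n\ge (n-N)(1-O(r_N)) - O\bigl(\sqrt{n\,r_N\log(1/\delta)}+\log(1/\delta)\bigr).
\]
Bernstein is preferable here because the variance is of order $n r_N$.

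\textbf{Step 3 (rates).} With $N=n^{(2s+d)/(3s+d)}$ we get
\[
N/n = n^{-s/(3s+d)}, \qquad r_N\asymp(\log(nAd/\delta)/N)^{s/(2s+d)}\approx (\log/n)^{s/(3s+d)},
\]
so both losses $N$ and $nr_N$ are of order $n(\log(nAd/\delta)/n)^{s/(3s+d)}$. The fluctuation term $\sqrt{nr_N\log}$ is of lower order for $n$ large. This yields the claimed bound $\widehat n\ge n[1-O((\log(nAd/\delta)/n)^{s/(3s+d)})]$.

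The main obstacle I expect is Step 1. There I must check carefully that the specific constants in $\widehat g^\star$ and $\widehat M$ truly dominate $f/I$, including the normalization mismatch and the boundary leakage of the kernel estimate outside $[0,A]^d$. The factor $5$ and the condition $8r_N\le I$ presumably exist precisely to make these inequalities close. The rate balancing in Step 3 is routine.
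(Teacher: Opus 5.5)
Your proposal is correct and is essentially the paper's proof. Both use the uniform bound from Theorem~\ref{thm:dimd2} plus Hoeffding on $\frac{A^d}{N}\sum_i f(X_i)$ to validate the envelope $\widehat M\widehat g^\star$, then binomial concentration for $\widehat n$, then a balance of $N/n$ against $r_N$. Your algebra is tidier: you use $1/\widehat M=1-6r_N/(\widehat I+r_N)$, and you check $c_N\le r_N$ explicitly where the paper uses it tacitly.

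The boundary leakage you worry about is harmless. Proposals landing outside $[0,A]^d$ are simply rejected, and the acceptance probability does not change. It does mean you should get $\widehat I\le I+r_N$ from Hoeffding, not by integrating the uniform bound.

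One caveat, which your Step~3 shares with the last line of the paper's proof: $r_N=A^dH_C\log(NAd/\delta)^{\frac{s}{2s+d}}\,n^{-\frac{s}{3s+d}}$. So your ``$\approx$'' hides a larger power of the logarithm than the stated $\frac{s}{3s+d}$. Reaching the stated rate exactly would require inflating $N$ by a factor $\log(nAd/\delta)^{\frac{s}{3s+d}}$.
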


\begin{proof}
By Theorem~\ref{thm:dimd2} and the definition of $r_N$, we have that with probability larger than $1-\delta$, for any $x \in [0,A]^d$,
$$\left|\widehat f(x) - f(x)\right| \leq r_N \frac{1}{A^d} =  r_N \mathcal U_{[0,A]^d}.$$
Let $\xi'$ be the event where the above holds. It has probability larger than $1-\delta$. 
Now let us define event $\xi''$ as
\begin{align*}
\xi'' &\eqdef \Bigg\{\left|\frac{A^d}{n} \sum_{i=1}^n f(X_i) - \int_{[0,A]^d} f(x)dx\right|\\
&\qquad \qquad \leq 2A^dc\sqrt{\frac{1}{N} \log(1/\delta)} \eqdef c_N\Bigg\}.
\end{align*}
By Hoeffding's inequality, we know that the probability of $\xi''$ is larger than $1-\delta$.
Let
$\xi = \xi' \cap \xi'',$
the probability of $\xi$ is larger than $1-2\delta$.
Therefore, we have that on $\xi$, 
\begin{align*}
\widehat g^\star &= \frac{\widehat f + r_{ N}\mathcal{U}_{[0,A]^d} }{ A^d/n \sum_{i=1}^n f(X_i) + r_N}\\
&\geq \frac{f}{ \int_{[0,A]^d} f(x)dx + r_N + c_N}\\
&\geq \frac{f}{ \int_{[0,A]^d} f(x)dx} (1  - 4r_N/m),
\end{align*}
with $m =  \int_{[0,A]^d} f(x)dx$ and where we used that $$m \geq 8r_N \geq  4r_N + 4c_N.$$
Note that on $\xi$
\begin{align*}
\frac{1}{1  - 4r_N/m} &= \frac{m}{m  - 4r_N}\\
&\leq \frac{A^d/N\sum_i f(X_i)+c_N}{A^d/N\sum_i f(X_i) - c_n  - 4r_N}\\
&\leq \frac{A^d/N\sum_i f(X_i)+r_N}{A^d/N\sum_i f(X_i) - 5r_N} = \widehat M,
\end{align*}
so that on $\xi$, the rejection sampling constant $\widehat M$ is indeed appropriate. We also have on $\xi$,
\begin{align*}
\widehat M &= \frac{A^d/N\sum_i f(X_i)+r_N}{A^d/N\sum_i f(X_i) - 5r_N}\\
&\leq \frac{m+r_N + c_N}{m- 5r_N-c_N} \\
&\leq \frac{m+2r_N}{m- 6r_N}.
\end{align*}
Therefore, on $\xi$, the rejection sampling is going to provide samples that are i.i.d.~according to $f$, and $\widehat n$ will be a sum of Bernoulli random variables of parameter larger than
\begin{align*}
\frac{1}{\widehat M} &\geq \frac{m- 6r_N}{m+2r_N}\\
&\geq (1 - 6r_N/m)(1 - 4r_N/m) \\
&\geq 1 - 20 r_N/m,
\end{align*}
since $m \geq 8r_N$.
We have that on $\xi$, with probability larger than $1-\delta$,
\[\widehat n \geq (n-N)(1  - 20r_N/m) - 2\sqrt{n\log(1/\delta)}.\]
This implies, together with the definition of $r_N$, $\widehat n$ is with probability larger than $1-3\delta$ lower bounded as
\begin{align*}
\widehat n & \geq  (n -N ) \left(1 -  20r_N/m -  4\sqrt{\frac{\log(1/\delta)}{n}}\right).
\end{align*}
Since
$$N = n^{\frac{2s+d}{3s+d}},$$
we have that for $n$ large enough, with probability larger than $1-3\delta$, there exists a constant $K$ such that
\begin{align}
\widehat n \geq n\left[1 - K\log(nAd/\delta)^{\frac{s}{3s+d}}n^{-\frac{s}{3s+d}} \right].
\end{align}
\end{proof}
\vspace{-1em}
Theorem~\ref{thm:prs2} implies that the number of rejected samples is negligible when compared to $n$: Indeed, the number of rejected samples divided by $n$ is of order
\[\left(\frac{\log(nd/\delta)}{n}\right)^{\frac{s}{3s+d}}.\]
This statement shows a light-years difference between \SRS and \PRS.
Therefore, unlike in \SRS where we only accept a fraction of samples, here we asymptotically accept almost all the samples.

\subsection{Discussion}
\paragraph{Rejected samples.} Theorem~\ref{thm:prs2} states that if we have an admissible proposal density $g$ and associated upper bound, as well as a lower bound $s$ on the smoothness of  density~$f$, then with high probability, \PRS rejects (asymptotically) only a negligible number of samples with respect to~$n$: Almost one sample is generated for every unit of budget spent, i.e.,\@ one call of $f$. This implies in particular that our bounds in terms of a number of i.i.d.~samples generated according to $f$ per computation of $f$ are better than the ones for \Astar~\citep{maddison2014a}.

On the other hand, it is not easy to do a direct comparison with \MCMC methods since these methods generate correlated samples with stationary distribution $f$ (asymptotically) while we generate exact i.i.d.~samples generated according to $f$ (with high probability). However, for any sample generated by \MCMC, we need to call $f$ once anyway, which is asymptotically the same as for our strategy.

\paragraph{Sampling from the pliable target.} If, for instance, one takes a Gaussian kernel $K_0$, then sampling from the pliable proposal
\[\widehat g^\star = \frac{1}{A^d/N\sum_{i=1}^n f(X_i) + r_{ N} } \Big(\widehat f + A^d r_{ N} \mathcal U_{[0,A]^d} \Big)\]
is very easy, since it is a mixture of Gaussian distributions (in $\widehat f$, by definition of a kernel estimator), and a uniform distribution on $[0,A]^d$.


\paragraph{The condition on $\int_{[0,A]^d} f(x)dx$.} In Theorem~\ref{thm:prs2}, we need that $$\int_{[0,A]^d} f(x)dx\geq 8r_N,$$ so that the empirical rejection sampling $\widehat M$ is not too large. If $\int_{[0,A]^d} f(x)dx$ is very small, then it means that $f$ is very peaky and therefore extremely difficult to estimate, besides the trivial case where $f=0$. This assumption is not very constraining since $r_N$ converges to $0$ with $N$ and therefore also with $n$.

\paragraph{Normalized distribution.} If the distribution $f$ is normalized, i.e., $$\int_{[0,A]^d} f = 1,$$ then the algorithm can be simplified. Indeed, the \textit{pliable proposal} can be taken as the mixture
$$\frac{1}{1 + r_{N} } \left(\widehat f + r_{ N} \mathcal U_{[0,A]^d}\right),$$
removing the normalisation constant $A^d/N \sum_i f(X_i)$. In this case, instead of $\widehat M$, we can simply use $1+r_N$ as the rejection sampling constant.

\subsection{Case of a distribution with unbounded support} In the case where the distribution $f$ is not assumed to have bounded support, our method does not directly apply since it involves uniform sampling on the domain. One way to go around this, in the case where $f$ is sub-Gaussian, is to sample on uniformly not on $[0,A]^d$, but on a hypercube centered in $0$ and of side length $\sqrt{\log(n)}$, and then perform our method using this hypercube as the domain. Then, we would estimate $f$ as $0$ outside this hypercube. Because of the properties of sub-Gaussian distributions that have vanishing tails, this will provide results that are similar to the ones on $[0,A]^d$, but with $A$ replaced by $\sqrt{\log(n)}.$ Then, for instance, the bound in Theorem~\ref{thm:dimd2} provides a bound that would scale on $\mathbb R^d$ itself as
$$\left|\widehat f(x) -  f(x)\right| \leq \cO\left( \log(n)^{d/2}\left(\frac{\log(Nd/\delta)}{N}\right)^{\frac{s}{2s+d}}\right),$$
i.e.,\@ the bound would become worse by a factor $\log(n)^{d/2}$. This would imply that the bound of Theorem~\ref{thm:prs2} would also become worse by a factor of $\log(n)^{d/2}.$ This is not a problem when $d$ is very small. However, even in the case where~$d$  is moderately small, this becomes quickly a problem. For this reason, this may not necessarily be a good approach in all cases, for a density with an unbounded support. To deal with this case, a better idea is to do a two-step procedure of rejection sampling, and then estimate $f$ by density estimation instead of regression estimation. (See Appendix~\ref{app:unb} for more details.) In this way, we avoid the problem of paying this additional $\log(n)^d$ in the bound. The algorithm is however slightly more complicated.


\subsection{Extensions for high dimensional cases (large $d$)}
\label{ss:extd}

One known limitation of rejection sampling is its lack of scalability with the dimension $d$. While our methodology mainly applies to small dimensions,
we now discuss some modifications of the method in order to better handle some specific cases when the ambient dimension $d$ is large,
and leverage the scalability of the initial phase.
To this end, we resort to optimization techniques that enable to approximately localize the mass of the distribution in time at most quadratic in $d$ (and possibly $\sqrt{d}$), assuming that the density is convex on the region of small mass and arbitrary on the region of high mass:
\begin{definition}We define the $\gamma$-support $\text{Supp}_{f,\gamma}$ of $f$ as the closure of its $\gamma$-level set $\Lambda_{f,\gamma}$, that is
\[\text{Supp}_{f,\gamma} = \overline{\Lambda}_{f,\gamma}\quad \text{where}\quad\Lambda_{f,\gamma} \eqdef \left\{x\in\mathcal D :  f(x) > \gamma \right\}\,,\]
\end{definition}
We say one \textit{localizes} the $\gamma$-support of $f$ if it finds some $x\in\text{Supp}_{f,\gamma}$. This is however non-trivial:
\begin{lemma}
In the general case when no assumption is made on $f$, localizing the $0$-support of $f$ may take a number of evaluation points exponential in the dimension $d$.
\end{lemma}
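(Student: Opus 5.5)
We argue by an adversarial, needle-in-a-haystack construction. Any algorithm that localizes the $0$-support of $f$ chooses evaluation points $x_1,x_2,\dots$ in $\mathcal D=[0,A]^d$, where $x_t$ may depend on $f(x_1),\dots,f(x_{t-1})$. We will exhibit a family of admissible functions such that each function is zero except on a small region, the regions of different functions are disjoint, and there are exponentially many functions in $d$. As long as the algorithm has only seen zeros, it cannot tell which member of the family it faces. We will also take care that the construction satisfies Assumption~\ref{ass:dens2}, so the lower bound holds even under the smoothness assumptions of the paper. Note that $f$ must then be nonnegative rather than strictly positive; otherwise the $0$-support is the whole domain and the question is trivial.

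\textbf{Construction.} Fix a radius $\rho$, for instance $\rho=A/4$, and partition $[0,A]^d$ into the $2^d$ subcubes of side $A/2$ with centers $z_j$. Let $\phi$ be a smooth bump supported in the Euclidean ball $B(0,\rho)$, for example $\phi(u)=(\rho^2-\|u\|_2^2)_+^3$. It is $C^2$ with bounded second derivatives, so it satisfies the Taylor bound of Assumption~\ref{ass:dens2} with some $c''$ and $s=2$, and it is bounded by some $c$. Define $f_j(x)=\phi(x-z_j)$. Each ball $B(z_j,\rho)$ lies inside its own subcube, so the supports are pairwise disjoint. All $f_j$ satisfy Assumption~\ref{ass:dens2} with the same constants, and $\text{Supp}_{f_j,0}=\overline{B(z_j,\rho)}$. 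If one wants exponentially many bumps with a fixed radius independent of $d$, one can instead pack more balls using a grid of spacing $2\rho$. That gives $(A/2\rho)^d$ bumps, which is still exponential in $d$.

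\textbf{Adversary argument.} Run a deterministic algorithm against the all-zero oracle. This produces a fixed sequence $x_1,\dots,x_T$ with $T<2^d$. Each $x_t$ lies in at most one ball $B(z_j,\rho)$, so some index $j^\star$ has a ball containing none of the queried points. On $f_{j^\star}$ every answer is $0$, so the algorithm behaves exactly as against the zero oracle. It therefore never queries a point of $\text{Supp}_{f_{j^\star},0}$, and it has no information allowing it to output such a point except by chance. For randomized algorithms, apply Yao's principle with $j$ drawn uniformly from the $2^d$ indices. Then $\Pr(\text{some query hits } B(z_j,\rho))\le T/2^d$, so the expected number of evaluations needed to succeed with constant probability is $\Omega(2^d)$.

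\textbf{Main obstacle.} The main difficulty is saying precisely what "localize" means. If the algorithm may output any point without verifying it, a blind guess succeeds with probability $2^{-d}$, so the claim has to be stated as "with constant probability" or "certified." The averaging argument handles either version. A second check is that the smoothness and boundedness constants do not depend on $d$ in a way that trivializes the construction; taking a fixed radius and a fixed bump profile ensures this.
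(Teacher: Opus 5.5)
Your argument is correct, and it takes a different route from the paper. The paper only analyzes the naive strategy of uniform sampling on $\mathcal D$. For that strategy, the number of draws until one lands in $\text{Supp}_{f,0}$ is geometric with mean $|\mathcal D|/|\text{Supp}_{f,0}| = (R/r_0)^d$, where $R$ and $r_0$ are the radii of Euclidean balls of matching volumes. This is exponential in $d$ as soon as $R/r_0$ stays bounded away from $1$.

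That is a one-line, instance-specific computation for a single fixed $f$. It says nothing about adaptive or cleverer query strategies. Your construction, $2^d$ disjoint bumps combined with the averaging/Yao argument, gives a lower bound against \emph{every} deterministic or randomized adaptive strategy. Because you use a bump whose constants do not depend on $d$, you also show that the hardness survives inside the smoothness class of Assumption~\ref{ass:dens2}. So your statement about the problem is stronger. The price is that it is a worst-case statement over a family of functions, with base $2$ (or $A/2\rho$), rather than the paper's explicit $(R/r_0)^d$ for a given $f$ under uniform search.

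One small technical point needs fixing. The paper defines $\text{Supp}_{f,0}$ as the \emph{closure} of the level set. With $\rho = A/4$, the closed balls of adjacent subcubes touch at facet centers. A query at such a point lies in two closed supports while returning the value $0$, so ``each $x_t$ lies in at most one ball'' holds only for the open balls. Either take $\rho < A/4$ strictly, or note that each point lies in at most two closed balls. The latter gives the bound $T/2^{d-1}$ in place of $T/2^d$, which leaves the exponential conclusion unchanged.
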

\vspace{-4mm}
\begin{proof} Indeed, using uniform sampling, this requires at least $|\mathcal D|/|\text{Supp}_{f,0}|$ samples on average. If we introduce $R$ such that $\mathcal D$ has the same volume as the Euclidean ball  of radius $R$ centered at $0$, $B_d(R) \subset \Real^d$,
and similarly $r_0$ such that $|\text{Supp}_{f,0}| = |B_d(r_0)|$, this means we need $(R/r_0)^d$ samples on average.
\end{proof}
\vspace{-4mm}
Thus, without further structure, the initial sampling phase of Algorithm~\ref{alg:2}  may require exponentially many steps.
We thus consider a more specific situation.
In practice, for numerical stability, it is important to be able to sample points that are not only in $\text{Supp}_{f,0}$ but also in $\text{Supp}_{f,\gamma}$, for $\gamma>0$ away from $0$.
Let $r_\gamma$ be such that $|\text{Supp}_{f,\gamma}| = |B_d(r_\gamma)|$.
We assume that $\text{Supp}_{f,0} = \mathcal D$ (and thus $r_0=R$) but
$R/r_\gamma = c_\gamma>1$, where $c_\gamma$ is not small, say $c_\gamma\geq 2$, which models a situation when
it is easy to localize the $0$-support but a priori hard to localize the $\gamma$-support.

Now, we assume that the restriction of $f$ on the complement of its  $\gamma$-support, $f_{|\text{Supp}_{f,\gamma}^c}$ is convex. This situation captures practical situations when the mass of the distribution is localized in a few small subsets of $\Real^d$. Note that $f$ does not need to be convex on $\text{Supp}_{f,\gamma}$ and that $\text{Supp}_{f,\gamma}$ can consist of several disjoint connected sets; thus $f$ does not need to be unimodal.
\begin{lemma}
Under the previous assumptions, if  we
can additionally evaluate $f$ and its gradient point-wise, it is possible to
find a solution $x$ in $\text{Supp}_{f,\gamma-\varepsilon}$ in no more than $O(d^2/\varepsilon^2)$, that is to localize $\text{Supp}_{f,\gamma}$ in less than an exponential number (with $d$) of trials, by replacing the uniform sampling scheme in the \emph{initial sampling} phase of \PRS with a combination of uniform sampling and convex optimization techniques.
\end{lemma}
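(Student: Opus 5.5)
The plan is to run a first-order convex method on $f$ restricted to the region where it is convex, namely $\text{Supp}_{f,\gamma}^c$. Since $f$ is convex there, we maximize it in the sense of driving its value up. Concretely, we use a subgradient/gradient scheme on $-f$ or on a suitable convex surrogate, and stop as soon as the value exceeds $\gamma-\varepsilon$.

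\textbf{Initialization.} Because $\text{Supp}_{f,0}=\mathcal D$, any uniformly drawn point $x_0\in\mathcal D$ has $f(x_0)>0$, so one uniform sample suffices to start. If $f(x_0)>\gamma-\varepsilon$ we are done. Otherwise $x_0$ lies in the convex region $\text{Supp}_{f,\gamma}^c$.

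\textbf{Ascent step.} We use the convexity of $f$ on the complement of the $\gamma$-support. The gradient inequality $f(y)\geq f(x)+\langle\nabla f(x),y-x\rangle$ holds for $x,y$ in that region, so $\nabla f(x)$ points, up to sign, toward directions where $f$ increases. We then run a projected (sub)gradient method on $\mathcal D$ for the problem of reaching the level $\gamma$. Equivalently, we treat this as a convex feasibility problem: find $y$ with $\langle\nabla f(x_t),y-x_t\rangle\geq \gamma-f(x_t)$. Each gradient evaluation gives a cutting plane, because any point of $\text{Supp}_{f,\gamma}$ must satisfy the corresponding linear inequality whenever the segment lies in the convex region. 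We would apply a cutting-plane or ellipsoid-type scheme, or Polyak-step subgradient iterations with the target value $\gamma$. With bounded gradients (from Assumption~\ref{ass:dens2}) and diameter of $\mathcal D$ of order $\sqrt d A$, the standard subgradient complexity bound $(\mathrm{diam}\cdot L/\varepsilon)^2$ yields $O(d^2/\varepsilon^2)$ iterations once the relevant constants are scaled in $d$. The method then returns a point with $f(x)\geq\gamma-\varepsilon$, i.e.\ in $\text{Supp}_{f,\gamma-\varepsilon}$.

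\textbf{Main obstacle.} Convexity holds only on $\text{Supp}_{f,\gamma}^c$, which is not itself a convex set. The cutting-plane and gradient inequalities therefore must be justified only along segments staying in that region. I would handle this by noting that the iterates stay in the region until the first time they enter $\text{Supp}_{f,\gamma}$. At that moment we have already succeeded. Before that moment the convexity inequalities at the visited points are valid, and the target is the first crossing of the boundary, where $f=\gamma$ by continuity. I would also allow restarts from fresh uniform samples if an iterate stalls, which is the ``combination of uniform sampling and convex optimization'' in the statement. Finally, once such an $x$ is found, the uniform initial sampling of \PRS is replaced by sampling around the localized points, which gives the non-exponential dependence on $d$.
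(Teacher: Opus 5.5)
Your route is the paper's: one uniform draw gives a starting point outside $\text{Supp}_{f,\gamma}$, and your case split at initialization is cleaner than the paper's volume argument. Then a first-order scheme whose iteration count is read off the standard subgradient bound $(\mathrm{diam}\cdot L/\varepsilon)^2$, which the paper obtains by citing Nesterov's Theorem 3.2.2 with step sizes (3.2.10). The concrete failure is in how you justify that step.

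Your cutting-plane reformulation has the inequality backwards. Convexity along $[x_t,y]$ gives $f(y)\ge f(x_t)+\langle\nabla f(x_t),y-x_t\rangle$, so $\langle\nabla f(x_t),y-x_t\rangle\ge\gamma-f(x_t)$ is \emph{sufficient} for $f(y)\ge\gamma$, not necessary. The points of $\text{Supp}_{f,\gamma}$ joined to $x_t$ by a segment otherwise lying in the convex region are boundary points with $f(y)=\gamma$. These satisfy $\langle\nabla f(x_t),y-x_t\rangle\le\gamma-f(x_t)$, i.e.\ they lie on the same side as $x_t$. A gradient evaluation therefore never cuts off the current point, and no ellipsoid-type volume decrease follows.

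The same sign problem blocks the transfer of the subgradient bound and of Polyak's step. Both are analysed for minimizing a convex $F$ through $\langle g_t,x_t-x^\star\rangle\ge F(x_t)-F(x^\star)$. You instead maximize a convex $f$, i.e.\ minimize the concave $-f$, and for that this inequality reverses. Your first-crossing device cannot repair a sign error. Also, the maximum of $f$ on $\mathcal D\setminus\text{Supp}_{f,\gamma}$ may sit on $\partial\mathcal D$ rather than on $\partial\text{Supp}_{f,\gamma}$. The paper's proof makes exactly the same leap, so you lose nothing relative to it, but neither argument establishes the $O(d^2/\varepsilon^2)$ count.

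What convexity genuinely gives is monotone ascent. Take $x_{t+1}=x_t+\eta\nabla f(x_t)$, and suppose the segments $[x_t,x_{t+1}]$ stay in $\mathcal D\setminus\text{Supp}_{f,\gamma}$. Then $f(x_{t+1})\ge f(x_t)+\eta\|\nabla f(x_t)\|^2$ and $\|\nabla f(x_{t+1})\|\ge\|\nabla f(x_t)\|$. So within $(\gamma-\varepsilon-f(x_0))/(\eta\|\nabla f(x_0)\|^2)$ steps, either the level $\gamma-\varepsilon$ is reached or a segment leaves that set. It leaves either through $\text{Supp}_{f,\gamma}$ (success up to a line search) or through $\partial\mathcal D$. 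A rigorous version of the lemma would need to control $\|\nabla f(x_0)\|$ and the chance that a uniform restart exits through $\partial\mathcal D$. The stated assumptions provide neither.
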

\vspace{-4mm}
\begin{proof}
The proof is as follows. First, since $r_0=R$, we can find a point $x_0$ in $\text{Supp}_{f,\gamma}^c$ in $\cO(1)$ trials by uniform sampling. Now, since $f_{|\text{Supp}_{f,\gamma}^c}$ is convex, it is maximal on the boundary of its domain, that is, on $\partial\text{Supp}_{f,\gamma}$. Thus, we use standard optimization techniques to find the maximum of $f$, starting from $x_0$: Using the fact that $f$ and its gradient can be evaluated point-wise, the simplest gradient descent scheme  (see e.g., \citealp{Nesterov2004}, Theorem 3.2.2 with parameter 3.2.10) finds a solution $x$ in $\text{Supp}_{f,\gamma-\varepsilon}$ in no more than
$\cO(d^2/\varepsilon^2)$ evaluation steps.
\end{proof}
\vspace{-4mm}
Note that using more refined (but more computationally and memory-wise expensive) methods such as the one from (\citealp{Nesterov2004}, 4.2.5 p.187) that relies on point-wise evaluations of the Hessian, one can get a solution $x\in\text{Supp}_{f,\gamma-\varepsilon}$ in no more than $\cO(\sqrt{\nu}\ln(\nu/\varepsilon))$ steps, assuming we can build a $\nu$-self concordant barrier function (see \citealp{Nesterov2004} for more explanations regarding such functions). As we are able to build $\cO(d)$-self-concordant barrier (and even $(1+\smallO(1))d$-self-concordant barrier, see \citealp{bubeck2015entropic,hildebrand2014canonical}, but at the price of a possibly high computational cost), it is then possible to get a solution in only $\cO(\sqrt{d}\ln(d/\varepsilon))$ steps. This is another example where one can get an exponential improvement over the general situation.

Now, repeating this procedure $T_s$ times (sample a starting point uniformly at random in $\mathcal D$, then optimize$f$ from this starting point), we can get $T_s$ evaluation points in $\text{Supp}_{f,\gamma-\varepsilon}$ in only $\cO(T_sd^2/\varepsilon^2)$ and respectively $\cO(T_s\sqrt{d}\log(d/\varepsilon))$ steps.

Finally, this naturally extends to cases when  $f_{|\text{Supp}_{f,\gamma}^c}$ may not be convex, but
$T(f)_{|\text{Supp}_{f,\gamma}^c}$ is convex for some known transformation $T$, and that $T(f)$, its gradient and its Hessian can all be evaluated point-wise: This is useful in particular when $f$ can only be evaluated up to a normalization constant, as is the case here and often in practice.
\input{Experiments}
\input{conc}

{\small
\textbf{Acknowledgements}
We thank Chris Maddison for his code of \Astar.
The research presented in this paper was supported by CPER Nord-Pas de Calais/FEDER DATA Advanced data science and technologies 2015-2020, by French Ministry of
Higher Education and Research, Nord-Pas-de-Calais Regional Council,  French National Research Agency project ExTra-Learn (n.ANR-14-CE24-0010-01),
and by German Research Foundation's Emmy Noether grant MuSyAD (CA 1488/1-1).
}

\bibliography{library} 

\bibliographystyle{icml2016}

\clearpage

\onecolumn
\input{app}

\end{document}

%% file: misosymbols.tex
\newtheorem{assumption}{Assumption}
\newtheorem{lemma}{Lemma}
\newtheorem{theorem}{Theorem}
\newtheorem{definition}{Definition}

\newcommand{\EE}[1]{\mathbb{E}\left[#1\right]}

\newcommand*{\MyDef}{\mathrm{\tiny def}}
\newcommand*{\eqdefU}{\ensuremath{\mathop{\overset{\MyDef}{=}}}}
\newcommand*{\eqdef}{\mathop{\overset{\MyDef}{\resizebox{\widthof{\eqdefU}}{\heightof{=}}{=}}}}

\newcommand{\cO}{\mathcal{O}}
\newcommand\smallO{
  \mathchoice
    {{\scriptstyle\mathcal{O}}}
    {{\scriptstyle\mathcal{O}}}
    {{\scriptscriptstyle\mathcal{O}}}
    {\scalebox{.7}{$\scriptscriptstyle\mathcal{O}$}}
  }

\newcommand{\cR}{\mathcal{R}}

\newcommand{\nothere}[1]{}

\usepackage{xspace}

\newcommand{\PRS}{\texttt{\textcolor[rgb]{0.5,0.2,0}{PRS}}\xspace}
\newcommand{\EPRS}{\texttt{\textcolor[rgb]{0.5,0.2,0}{EPRS}}\xspace}

\newcommand{\SRS}{\texttt{SRS}\xspace}
\newcommand{\MH}{\texttt{MH}\xspace}
\newcommand{\MCMC}{\texttt{MCMC}\xspace}
\newcommand{\ARS}{\texttt{ARS}\xspace}
\newcommand{\Astar}{\texttt{A$^\star$\,sampling}\xspace}
\newcommand{\OSstar}{\texttt{OS$^\star$}\xspace}

%% file: intro.tex
\section{Introduction}
\vspace{0.2em}
In machine learning, we often need to sample from distributions. \emph{Rejection sampling}  is a known textbook method for sampling from density $f$ with intractable direct sampling. The basic method (\SRS, Figure~\ref{fig:SRS}) constructs an \emph{envelope}  $Mg$ that is an upper bound on $f$, where $g$ is a \emph{proposal distribution} from which we can sample easily.
Each time we get a sample from $g$, we accept or reject it with probability depending on the value of $g$ and $f$ in this point. 
To guarantee efficiency, a good proposal distribution is a necessary knowledge
we need to provide to the sampler.
In the absence of such knowledge, we typically resort to a uniform upper bound
on $f$ which results in high rejection rates and the method stays in textbooks. What's wrong with a high rejection rate?  The reason is that for every point proposed, we need to call~$f$  to decide whether this point is accepted. If many points are rejected, then $f$ is called many times with few generated samples. When evaluating $f$ is costly, then we are wasting resources.


\begin{figure}
\vspace{-0.1em}
\begin{center}
\begin{tikzpicture}
\begin{axis}[
  no markers, domain=0:9.5, samples=500,
  axis lines*=left, 
  height=5cm, width=8cm,
  xtick=\empty, ytick=\empty,
  enlargelimits=false, clip=false, axis on top,
  grid = major
  ]

  \addplot [mark =none, fill = gray!15, draw=none] {gauss(5,2.25)*5.3} \closedcycle;
  \addplot [mark =none, fill = white, draw=none] {gauss(6,.5)+gauss(3,1)};
  \addplot [very thick,cyan!50!black, name path = plot1] {gauss(6,.5)+gauss(3,1)}
  node [pos=0.74, below left] {\scriptsize{target $f$}};
  \addplot [very thick,red!50!black, name path = plot2] {gauss(5,2.25)*5.3}
  node [pos=0.4, above left] {\scriptsize{envelope  $M g$}};
  \addplot [very thick,red!50!orange, dashed] {gauss(5,2.25)*3}
  node [pos=0.58, above left] {\scriptsize{proposal  $g$}};
  \addplot[thick, samples=50, smooth,green!50!black, name path=three] coordinates {(2.8,0)(2.8,0.39)};
  \addplot[thick, samples=50, smooth,red, name path=four] coordinates {(2.8,0.395)(2.8,0.58)};
    \pgfplotsset{
      after end axis/.code={
        \node[black] at (axis cs:2.8,-0.04){\small{$x$}};
        \node[black] at (axis cs: 5,0.79){\small{$\mathcal{R}$}};
        \node[black] at (axis cs: 3.6,0.12){\small{$\mathcal{A}$}};
          }
    }
\end{axis}
\end{tikzpicture}
\vspace{-0.6cm}
\end{center}
\caption{Simple rejection sampling}
\label{fig:SRS}
\vspace{-0.5cm}
\end{figure}

\footnotetext{Research done during author's stay at SequeL, INRIA Lille.}

To alleviate this problem, \emph{adaptive} rejection samplers~\citep{gilks1992derivative,gilks1992adaptive,martino2011generalization} 
increase the acceptance rate by taking advantage of particular properties of~$f$. They construct
a proposal $g$ that is better adapted to $f$ than just a uniform distribution.
Adaptive rejection sampling (\ARS, \citealp{gilks1992adaptive}) is the most known among them.  \ARS works when the target is \emph{log-concave} and constructs a sequence of proposal densities tailored to~$f$.
In particular, if a sample that is drawn from a proposal $g_{t}(x)$ is rejected, this sample is used to build an improved proposal, $g_{t+1}(x)$, with a higher  acceptance rate.
\ARS then adds the rejected point to set $S$ of points defining an envelope of $f$ in order to decrease the area $\cR$ between the proposal and the target density~\citep{gilks1992adaptive,gilks1992derivative}. However, \ARS  can only be applied for log-concave (and thus \emph{unimodal}) densities, which is a stringent constraint in practice~\citep{gilks1992adaptive,martino2011generalization}
and therefore its use is limited.

The adaptive rejection Metropolis sampling \citep{gilks1995adaptive} extends \ARS to deal with non-log-concave densities by adding a Metropolis-Hastings (\MH) control step after each accepted sample.
However, the algorithm produces a Markov chain where the resulting samples are correlated.
Another adaptive method  is the convex-concave adaptive rejection sampling~\citep{gorur2011concave} where the target distribution is decomposed as the sum of convex and concave functions. In this method, the concave part is treated as in \ARS and uses the same set $S$ to construct an upper bound for the convex part by considering the secant lines.

A recent approach is \Astar~\citep{maddison2014a} that
was built on generalizing the
Gumbel-Max trick to the continuous case.
This method allows to sample from $f(x) \propto \exp\left(\phi(x) \right)$, where $\phi(x) = i(x) + o(x)$ for some bounded $o(x)$, and some tractable $i(x)$ that is equivalent to the proposal of a classical rejection sampling method.
We will relate to \Astar and compare to it empirically as well.
A similar approach is done in \OSstar~\citep{dymetman2012os} where the sampling is done according to the volume of the region under the proposal.

All these adaptive rejection sampling methods either pose strong assumptions on $f$
or do not come with performance guarantees. In this paper, we give an \emph{adaptive strategy}
that can work for a general class of densities and \emph{guarantee the number of accepted samples}. An interesting approach for the related, yet different problem of adaptive importance sampling, can be found in the work of \citet{zhang1996nonparametric}, where the author aims at integrating a function according to a density. To be  efficient and sequential, \citeauthor{zhang1996nonparametric} sequentially approximates the density times the absolute value of the function to be integrated by kernel methods and sample from this approximation. In particular, \citeauthor{zhang1996nonparametric} estimates the integral of interest by a weighted sum of the collected samples, where the weights depend on the distance between the estimated product function and the true product. This method is interesting because it is non-parametric and therefore requires few assumptions about the shape of the target object. 

In this paper, we consider a related idea for rejection sampling. In particular, we use \emph{non-parametric kernel methods} to estimate the target density. This estimate is then used to build a proposal density from which samples are drawn in order to improve the acceptance. This idea is related to the results of~\citet{zhang1996nonparametric} but there is a significant difference coming from a difference between importance sampling and rejection sampling and which makes the rejection sampling problem harder: While importance sampling requires only a proposal estimator that is good according to the $L_2$ risk, rejection sampling requires a proposal estimator that is good in $L_{\infty}$ risk and with high probability. This highlights a fundamental difference between rejection sampling and importance sampling and makes the problem of adaptive rejection sampling significantly more challenging than the problem of adaptive importance sampling.

To address this challenge, we present \emph{pliable rejection sampling} (\PRS), a simple variation of rejection sampling. Based on recent advances in density estimation and associated confidence sets, which allow to obtain a \textit{uniform} bound on the estimation error of estimators~\citep{tsybakov1998pointwise, korostelev1999asymptotic, gine2010adaptive, gine2010confidence} we propose a method where the proposal is an upper bound on the density that is based on a \emph{kernel estimator} of the density. 
The motivation behind the choice of a kernel estimator
comes (i) from the guarantees on the quality of the estimate and (ii) from the
ability to easily sample from it for some specific kernels.

\PRS has several advantages. First, it does not pose strong assumptions on~$f$
and assumes only mild smoothness properties. For instance, our assumptions are weaker than existing assumptions like log-concavity, concavity or convexity, since if a function satisfies any of these assumptions, then it is in a Besov ball of smoothness two, and therefore smooth enough for our method. Second, it is easy to implement, since
it combines common kernel density estimation and traditional rejection sampling.
Finally, it comes with a clean and tractable analysis which provides guarantees on the
number of samples for a given number of calls to~$f$. Our results imply that asymptotically, if we have a budget of $n$ calls to $f$,
then with high probability, we will obtain~$n$ i.i.d~samples distributed according to~$f$ up to a negligible term. Our procedure is therefore asymptotically almost as efficient as if we were sampling according to~$f$ itself.

\PRS is actually more efficient than \Astar in the sense of \emph{budget}. Indeed, in order to generate a single sample from $f$ using \Astar, we need to consume several calls to $f$. This implies that even in the asymptotic regime if we have a budget of $n$ calls to $f$, we will obtain less  than $a\times n$ i.i.d~samples distributed according to $f$  where $a<1$ is a small constant.
Furthermore, a \emph{huge difference} between \PRS and \Astar,
that makes \PRS practically appealing, is that the user
\emph{does not need to provide} any major information such as
a \emph{decomposition} of $f$ into $i(x)$ and $o(x)$ as in the case of  \Astar.

Since \PRS is based on rejection sampling, it is useful in the case
when the sample space is low-dimensional and when~$f$ is not very \emph{peaky},
which is also the case for \Astar. In particular, \Astar needs to find the maximum of the convolution of $f$ and a Gumbel process,  in order to output a sample. A typical case of a peaky distribution is a posterior
distribution commonly present in Bayesian approaches, where computationally efficient \MCMC methods~\citep{metropolis1949monte,andrieu2003introduction}
are the tool of choice, as they also scale much better with the dimension.
However, the samples are typically correlated and additional
measures need to be taken to make the samples perfect~\citep{andrieu2003introduction}.
Even though there exist \MCMC methods that perform perfect
sampling~\citep{propp1998coupling,fill1998interruptible}, they have to assume certain restrictions,
and are not used in practice since they are not efficient~\citep{andrieu2003introduction}.

In contrast, our method is a \emph{perfect sampler with high probability}. Our analysis shows that with high probability, asymptotically, each computation of~$f$ leads to the sampling of an i.i.d.\@ sample according to $f$. In Section~\ref{ss:extd},
we also provide an extension on how to deal with the high dimensional case and the case of a peaky density (as in the Bayesian posterior case) by a localization method.

%% file: Experiments.tex
\section{Numerical experiments}
\begin{figure*}[t]
\vspace{-0.2em}
\centering
        \begin{subfigure}[b]{0.32\textwidth}
                \includegraphics[width=\textwidth]{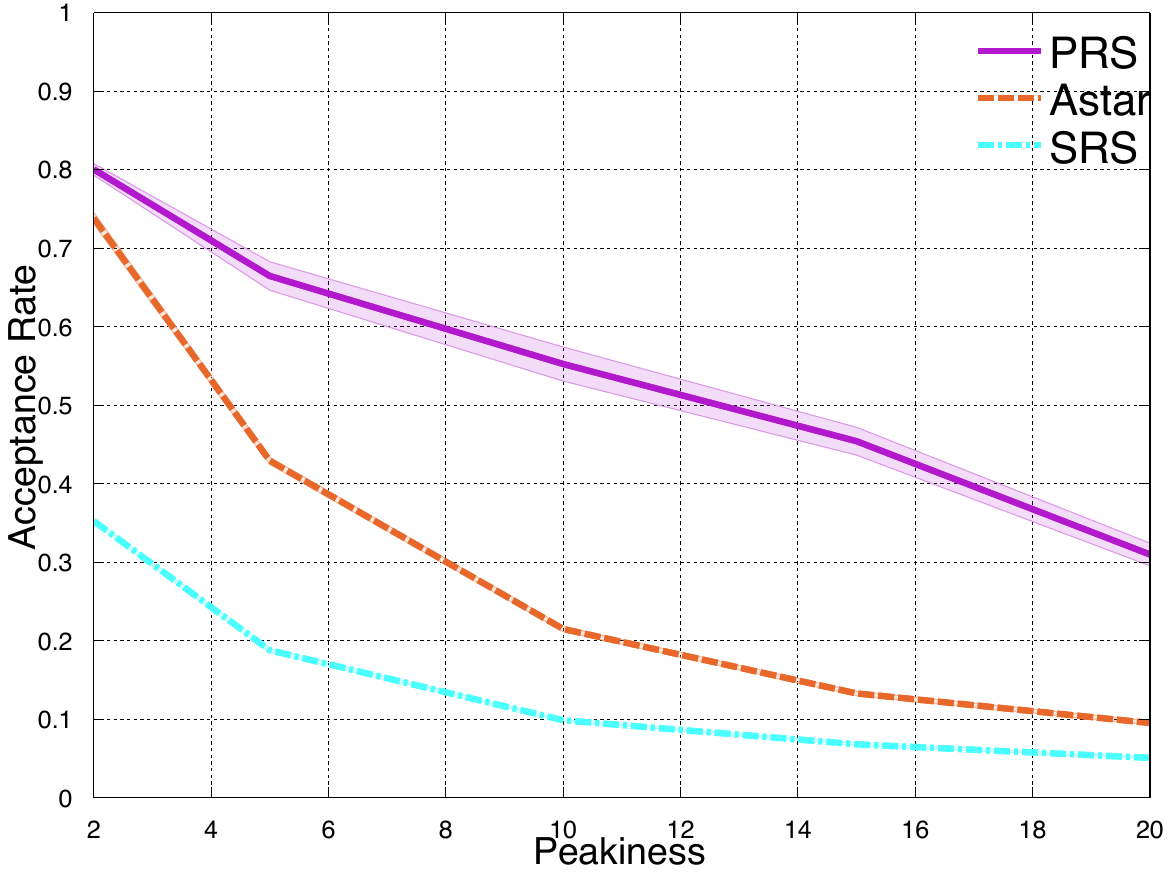}
                \caption{$n=10^4$}
                \label{a}
        \end{subfigure}
        ~
        \begin{subfigure}[b]{0.32\textwidth}
                \includegraphics[width=\textwidth]{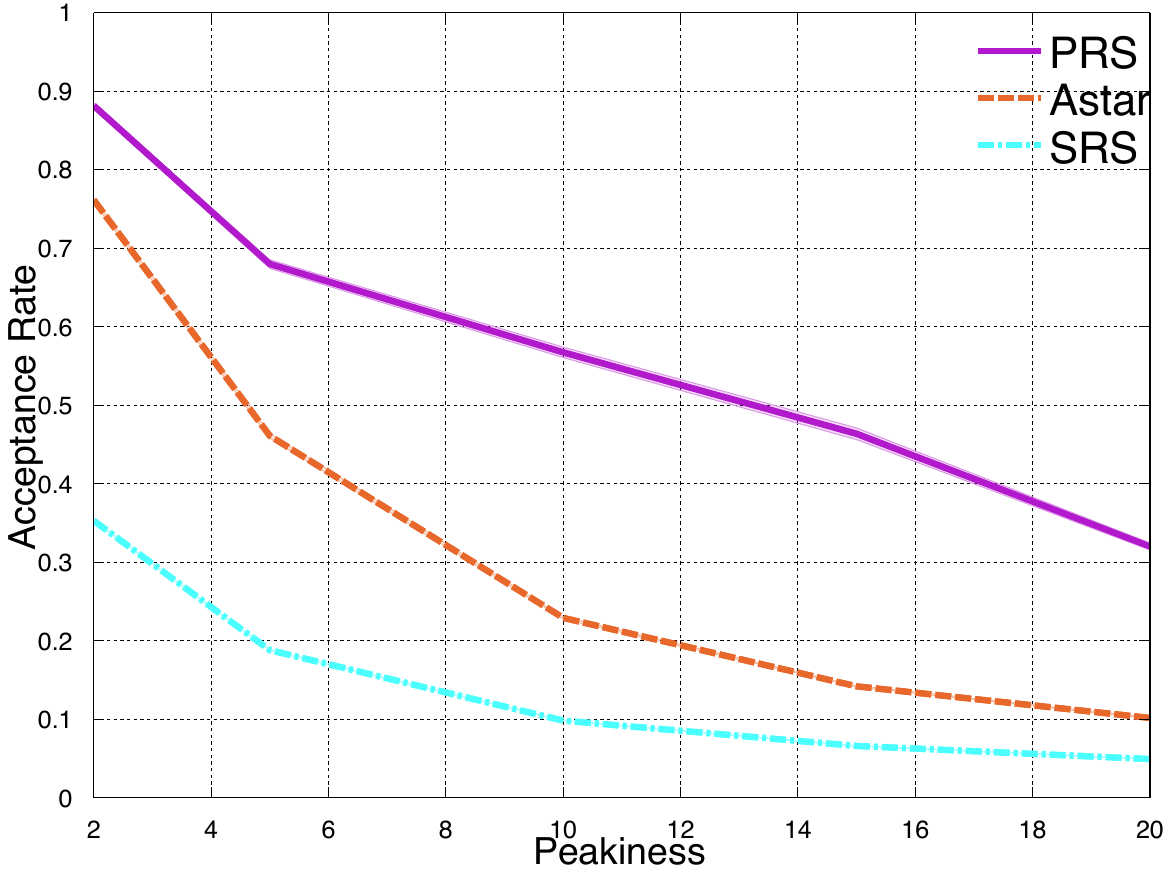}
                \caption{$n=10^5$}
                \label{b}
        \end{subfigure}
        ~
                \begin{subfigure}[b]{0.32\textwidth}
                \includegraphics[width=\textwidth]{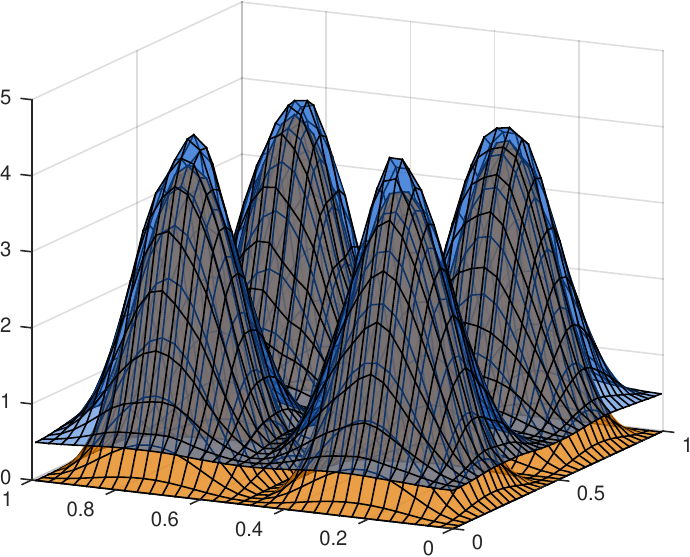}
                \caption{2D example}
        \end{subfigure}

        \caption{\textbf{Left, center:} Acceptance \textit{vs.}~peakiness. \textbf{Right:} 2D target (orange) and the pliable proposal (blue)}
        \label{fig:Acc_Peak}
        \vspace{-0.5em}
\end{figure*}

We compared \PRS to \SRS and \Astar numerically.
In particular, we evaluated the sampling rate, i.e., the
proportion of samples that a method gives with respect to the number of evaluation of $f$.
This is equal to the definition of \emph{acceptance rate} for \SRS and \PRS.

All the experiments were run with $\delta = 0.01$. $H_C$ was set through a cross-validation in order to provide a good proposal quality, i.e., how close is the proposal to the target distribution. $H_C$ is a problem dependent quantity and can capture prior information on the smoothness of $f$.

The goal of our experiments is to (i) show that \PRS outperforms \SRS with the same amount of evaluations of $f$
and (ii) that \PRS's performance is comparable to \Astar, which is a recent state-of-the-art sampler.
We use two of the same settings as in~\citet{maddison2014a}.
We emphasize again that \Astar \emph{is given extra
information} in form of the decomposition,  $f(x) \propto \exp\left(i(x) + o(x) \right)$ that \PRS does not need
and that is not available in general.
\subsection{Scaling with peakiness}
We first study the behavior of the acceptance rate as a function of the peakiness of~$f$.
In particular, we use the  target density of~\citet{maddison2014a},
$$ f(x) \propto \frac{e^{-x}}{(1+x)^a},$$
where $a$ is the \emph{peakiness parameter}.
By varying $a$, we can control the difficulty of accepting a sample coming from a proposal distribution.
For \Astar, we use the same decomposition of $\phi(x) = i(x) + o(x)$
as~\citeauthor{maddison2014a} .

Figure~\ref{fig:Acc_Peak} gives the acceptance rates of all these methods for $a \in \{2, 5, 10, 15, 20\}$ averaged over 10 trials. Figure~\ref{a} corresponds to a budget of $n=10^5$ requests to~$f$ and Figure~\ref{b} to a budget of $n=10^6$ requests.  \PRS performs in both cases better than the \Astar and \SRS. Moreover, the performance of \PRS improves with $n$.
Indeed, with a larger number of evaluations, the estimate $\widehat{f}$ gets better and more precise, allowing the construction of a tighter upper bound. This provides a good quality proposal that is, in this case, able to perform better than
\SRS and even outperform \Astar even for a low peakiness.

\subsection{Two-dimensional example}
In this part, we compare the three methods on the distribution defined on [0,1]$^2$ as
\[ f(x,y) \propto \left(1 + \sin\left(4\pi x-\frac{\pi}{2}\right)\right) \left(1 + \sin\left(4\pi y-\frac{\pi}{2}\right)\right).\] 
\noindent

Figure~\ref{fig:Acc_Peak} (right)
shows the target density, along with the derived envelope.
%
Table \ref{tab:2d} gives the acceptance rates of the three methods for $n=10^6$, where \PRS outperforms \SRS
and approaches the performance of \Astar.

\begin{table}[!htb]
        \begin{subtable}{\linewidth}
        \centering
        \begin{tabular}{ c | c c }
        $n = 10^6$&  \makecell{\emph{acceptance}  \emph{rate}}& \makecell{\emph{standard}  \emph{deviation}}\\ \hline
        \textbf{\PRS }& 66.4\% & 0.45\% \\
        \textbf{\Astar }& 76.1\% & 0.80\% \\
         \textbf{\SRS} & 25.0\% & 0.01\%
        \end{tabular}
        \end{subtable}
        \caption{2D example:\,Acceptance rates  averaged over 10 trials}
        \label{tab:2d}
\end{table}

\subsection{Clutter problem}
In order to illustrate how \PRS behaves for inference tasks, we tested the methods on the clutter problem of  \citet{minka2001expectation} as did~\citet{maddison2014a}.
The goal is to sample from the posterior distribution of the mean of normally distributed data with a fixed isotropic covariance, under the assumption that some points are outliers. The setting is again the same as the one of \citet{maddison2014a}: In $d$ dimensions, we generate 20 data points, a half from $[-5,-3]^d$ and another half from $[2,4]^d$, which provides a bimodal posterior that is very peaky.

Table \ref{tab:clutter} gives the acceptance rates for the clutter problem in the 1D and 2D cases with a budget of $n=10^5$ requests to the target $f$. This target is the posterior distribution of the mean. In this case, even if \PRS gives a reasonable acceptance rate, it is not performing better than \Astar.
\begin{table}[!htb]
        \begin{subtable}{\linewidth}
               \centering\hspace{-0.2cm}
        \begin{tabular}{ c | c c }
        $n = 10^5$, 1D &  \makecell{\emph{acceptance}  \emph{rate}} &\makecell{\emph{standard}  \emph{deviation}}\\ \hline
        \textbf{\PRS }& 79.5\% & 0.2\% \\
        \textbf{ \Astar }& 89.4\% & 0.8\% \\
         \textbf{\SRS} & 17.6\% & 0.1\%
        \end{tabular}
        \end{subtable}
\\
\vspace{0.2cm}

        \begin{subtable}{\linewidth}
               \centering
        \begin{tabular}{ c | c c }
       $n = 10^5$, 2D&  \makecell{\emph{acceptance}  \emph{rate}} & \makecell{\emph{standard}  \emph{deviation}}\\ \hline
       \textbf{\PRS }& 51.0\% & 0.4\% \\
       \textbf{\Astar }& 56.1\% & 0.5\% \\
        \textbf{\SRS} & $2\cdot 10^{-3}$\% & $10^{-5}$\%
        \end{tabular}
        \end{subtable}
        \caption{Clutter problem:\,Acceptance rates averaged over 10 trials}
        \label{tab:clutter}
\end{table}

%% file: conc.tex
\section{Conclusion}
We propose pliable rejection sampling (\PRS), an adaptive rejection sampling method that learns its proposal distribution.
While previous work on adaptive rejection sampling aimed at decreasing the area
between the proposal and the target by iteratively updating the proposals according to sampling,
we learn it using a kernel estimator.  We show that \PRS outperforms traditional
rejection sampling and fares well with recent \Astar.
Our main contribution is a high-probability guarantee on the number of accepted samples using \PRS, and a  guarantee that only
a provably negligible number of samples are rejected with respect to the budget.

Since \PRS only estimates the proposal once, a possible algorithmic extension of \PRS is to iteratively update the kernel estimate as we gather more samples.
While this would result in the same theoretical acceptance guarantee as \PRS, the empirical performance is likely to be better.

We have also shown how to improve the scalability of the method to handle moderate dimensions --- in high dimensions, one would still suffer from numerical and memory cost. 
However, under the discussed assumptions, we get a number of steps which is polynomial in $d$, as opposed to exponential in $d$.
Extending the method to even higher dimension is an interesting research direction.






%% file: app.tex

\appendix


\section{Proofs of the main results}\label{app:proof}

\subsection{Proof of Theorem~\ref{thm:dimd2}}

Using Assumptions~\ref{ass:dens2} and~\ref{ass:ker} and the fact that $\|\cdot\|_2 \leq \|\cdot\|_s$, since $s\leq 2$, we have that for any $x \in [0,A]^d$,
\begin{align}
\Big| \EE{\widehat f(x)}  - f(x) \Big|   &\leq \frac{c''A^d}{A^dh^d} \int_{\mathbb{R}^d} \left|K\left(\frac{y - x}{h}\right)\right| \|y-x\|_s^{s} dy \nonumber \\
&= \frac{c''}{h^d} \sum_{i=1}^d \int_{\mathbb{R}^d} |y_i-x_i|^{s} \prod_{j=1}^d K_0\left(\frac{y_j - x_j}{h}\right) dy_j  \nonumber \\
&= \frac{c''}{h^d} \sum_{i=1}^d \int_{\mathbb R} K_0\left(\frac{y_i - x_i}{h}\right) |y_i-x_i|^{s} dy_i \prod_{j\neq i} \underbrace{\int_{\mathbb R} K_0\left(\frac{y_j - x_j}{h}\right) dy_j}_{=h} \nonumber \\
&= \frac{c''}{h} \sum_{i=1}^d\int_{\mathbb R} K_0\left(\frac{y_i - x_i}{h}\right) |y_i-x_i|^{s} dy_i \nonumber \\
&= c''d h^{s}\int_{\mathbb R} K_0(u) |u|^{s} du. \nonumber
\end{align}
Since $0 < s \leq 2$, we have that $ s \log|u| \leq \max(0, 2\log|u|)$, which means that $|u|^s \leq \max(1, |u|^2) \leq 1+|u|^2$. Therefore, we can write
\[\int_{\mathbb R} K_0(u) |u|^{s} du \leq \int_{\mathbb R} K_0(u) \left(1+|u|^2\right) du \leq 1 + C'.\]
Thus,
\begin{equation}
\Big| \EE{\widehat f(x)}  - f(x) \Big| \leq c'' (1 + C') d h^{s}.
\label{eq:Bias1}
\end{equation}
Let $0<\delta< \frac{1}{2}$. Note that since $K_0$ is non-negative and bounded by $C$, we have that $K \leq C^d$.
For $y \in \mathbb R^d$, let us write $Y_i = A^df(X_i)K\left(\frac{y - X_i}{h}\right)$, where $X_i \sim \mathcal U([0,A^d])$. This implies that $|Y_i| \leq c(CA)^d$. Moreover, since $f \leq  c$, we have
\begin{align*}
\mathbb V(Y_i) \leq \EE{Y_i^2} &= \frac{A^{2d}}{A^d}\int_{[0,A]^d} K^2\left(\frac{y - x}{h}\right) f^2(x) dx\\
&\leq c^2A^d\int_{\mathbb R^d} K^2\left(\frac{y - x}{h}\right) dx\\
&\leq c^2 A^d h^d \int_{\mathbb R^d} K^2(u) du\\
&\leq c^2C^d A^d h^d \int_{\mathbb R^d} K(u) du \\
&= c^2A^dC^d h^d.
\end{align*}

%
Therefore, by Bernstein's inequality, for any $x \in [0,A]^d$, we know that with probability larger than $1-\delta$ 
\begin{align}
\Big|  \EE{\widehat f(x)} - \widehat f(x)  \Big| 
&= \Bigg| \frac{A^d}{Nh^d} \sum_{k=0}^n \left(f(X_i)K\left(\frac{X_i - x}{h}\right) 
- \EE{f(X_i)K\left(\frac{X_i - x}{h}\right)} \right) \Bigg|\nonumber\\
&\leq  \frac{1}{Nh^d} \left(2c \sqrt{C^dA^d h^d N \log(1/\delta)} + 2cA^dC^d \log(1/\delta)\right)\nonumber\\
&\leq 2c\sqrt{C^dA^d\frac{\log(1/\delta)}{Nh^d}} + 2cA^d C^d \frac{\log(1/\delta)}{Nh^d},\label{eq:Variance1}
\end{align}
for $n$ large enough with respect to $\delta$.

By Equations~\ref{eq:Bias1} and \ref{eq:Variance1}, we have for any $x\in [0,A]^d$, with probability larger than $1-\delta$, that
\begin{align}
\left|\widehat f(x) -  f(x)\right| & \leq  2c\sqrt{C^dA^d\frac{\log(1/\delta)}{Nh^d}} 
+ 2c C^dA^d \frac{\log(1/\delta)}{Nh^d} +  c'' (1 + C') d h^{s}.\label{eq:Risk1}
\end{align}
Therefore, for $h \eqdef h_s(\delta) = \left(\frac{\log(NA/\delta)}{N}\right)^{\frac{1}{2s+d}}$, we get that
with probability larger than $1-\delta$,
\begin{align}
\left|\widehat f(x) -  f(x)\right|  \leq \left(2c\sqrt{C^dA^d}+c'' (1 + C') d \right) \left(\frac{\log(NA/\delta)}{N}\right)^{\frac{s}{2s+d}} 
 + 2 c C^d A^d \left(\frac{\log(NA/\delta)}{N}\right)^{\frac{2s}{2s+d}}.\label{eq:eqRisk1}
\end{align}

Let $\mathcal X$ be a $1/N^{v}$ covering set in $\|.\|_2$ norm and of minimal cardinality of the hypercube $[0,A]^d$. Its cardinality is at most $(AN^v\sqrt{d})^d\leq (ANd)^{d(v+1)}$, since the covering number of $[0,A]^d$ with small hypercubes of side $N^{-v}/\sqrt{d} $ is smaller than $\left(A / \frac{N^{-v}}{\sqrt{d}}\right)^{d} = (AN^v\sqrt{d})^d$,
and  hypercubes of side $N^{-v}/\sqrt{d}$
are contained in $\ell_2$ balls of diameter $N^{-v}$. By a union bound and Equation~\ref{eq:eqRisk1}, it holds that with probability larger than $1-\delta$, for any $x \in \mathcal X$, we have
\begin{align}
\left|\widehat f(x) -  f(x)\right| &\leq 4d(v+1)\left(2c\sqrt{A^dC^d}+c'' (1 + C') d\right) \nonumber
 \left(\frac{\log(NAd/\delta)}{N}\right)^{\frac{s}{2s+d}} \nonumber
\!\!\!+ 4cd(v+1) A^dC^d \left(\frac{\log(NAd/\delta)}{N}\right)^{\frac{2s}{2s+d}}. 
\end{align}
Let $\xi$ be the event of probability larger than $1-\delta$ where this is satisfied.
Let $y \in [0,A]^d$. Then, there exists $x \in \mathcal X$ such that $\|x - y\|_2\leq 1/N^{v}$. Since $K_0$ is $\varepsilon$-H\"older and since $f$ is bounded by $c$, we have that
\begin{align*}
\left|\widehat f(x) - \widehat f(y)\right| \leq cN(C'')^d\frac{c^d}{h_s(\delta)^d N^{d \varepsilon v}}
\leq (C'')^dc^{d+1}\frac{N^{1 + \frac{d}{2s+1}}}{ N^{d \varepsilon v}},
\end{align*}
and for $v > 3/\varepsilon \log(1 + 1/(C''c))$, we get
$$\left|\widehat f(x) - \widehat f(y)\right| \leq N^{-1}.$$
In the same way, by Assumption~\ref{ass:dens}, we get also that for $v \geq 2\log(1 + 1/(c''+c))/\min(1,s)$,
$$\left| f(x) -  f(y)\right| \leq c'' N^{-v} + c N^{-vs} \leq N^{-1}.$$
This implies that
$$\left|\widehat f(y) - f(y)\right| \leq \left|\widehat f(x) - f(x)\right| +2/N,$$
which means that on $\xi$, for any $x \in [0,A]^d$ we have
\begin{align*}
\left|\widehat f(x) -  f(x)\right| &\leq  4d(v+1) \left(2\sqrt{cA^dC^d}+cdC'\right)
 \left(\frac{\log(NAd/\delta)}{N}\right)^{\frac{s}{2s+d}}
\!\!\!\!+ 4cd(v+1) A^dC^d \left(\frac{\log(NAd/\delta)}{N}\right)^{\frac{2s}{2s+d}}+ 2/N,
\end{align*}
where $v = \log\left(1 + \frac{1}{c''+c}\right)\frac{2}{\min(1,s)}+ \frac{3}{\varepsilon} \log\left(1 + \frac{1}{C''c}\right)$. Therefore, we get that for any $x \in [0,A]^d$,
\begin{align*}
\Big| & \widehat f(x) -  f(x)\Big| \\
& \leq  4d(v+1)\left(2\sqrt{cA^dC^d}+c'' (1 + C') d \right) \left(\frac{\log(NAd/\delta)}{N}\right)^{\frac{s}{2s+d}}
+ 4cd(v+1) A^dC^d \left(\frac{\log(NAd/\delta)}{N}\right)^{\frac{2s}{2s+d}}+ 2/N \\
&\leq \left(2\sqrt{cA^dC^d}+c'' (1 + C') d  + c(AC)^d \left(\frac{\log(NAd/\delta)}{N}\right)^{\frac{s}{2s+d}}\right)
 8d(v+1)  \left(\frac{\log(NAd/\delta)}{N}\right)^{\frac{s}{2s+d}},\\
& \leq H_0 \left(\frac{\log(NAd/\delta)}{N}\right)^{\frac{s}{2s+d}},
\end{align*}
where $v = \log\left(1 + \frac{1}{c''+c}\right)\frac{2}{\min(1,s)}+ \frac{3}{\varepsilon} \log\left(1 + \frac{1}{C''c}\right)$ and $H_0$ a constant that depends on $d, v, c, c'', C, C'$, and $A$.

\section{Extension to densities with unbounded support}\label{app:unb}


In this part of the appendix, we extend our method to densities that do not have a compact support. For the sake of clarity, we assume that $f$ is normalized here. In fact, we have already shown how to deal with the unnormalized case in the Section~\ref{algo_results}, where the normalization constant is estimated by a Monte-Carlo sum over the same samples used to estimate $f$.

\begin{assumption}[Assumption on the density]\label{ass:dens}
The density $f$, defined on $\mathbb R^d$, is sub-Gaussian, i.e.,~there exist constants $c,c' > 0$ such that the density $f$ satisfies for any $x \in \mathbb R^d$
$$f(x) \leq c \exp\left(-c' \|x\|_2^2\right).$$

Moreover, $f$ can be uniformly expanded by a Taylor expansion in any point up to degree $s$ for some $0<s\leq 2$, i.e.,~there exists $c'' >0$ such that, for any $x\in \mathbb R^d$, and for any $u\in \mathbb R^d$, we have
$$\left|f(x + u) - f(x) - \langle \bigtriangledown f(x), u\rangle \mathbf 1\{s >1\}\right| \leq c'' \|u\|_2^{s}.$$
\end{assumption}

The above assumption means that the tails of $f$ are sub-Gaussian, and also that $f$ is in a H\"older ball of smoothness $s$. Note that a bounded function $f$ with a compact support in $\mathbb R^d$ is sub-Gaussian. The fact that $f$ is in a H\"older ball of smoothnes~$s$ is also not very restrictive, in particular for a small $s$.

\subsection{Uniform bounds for kernel density estimation}\label{ss:estf}
Let $X_1, \ldots, X_N$ be $N$ points generated by $f$. Let us define for $h \eqdef h_s(\delta) = \left(\frac{\log(N/\delta)}{N}\right)^{\frac{1}{2s+1}}$,
$$\widehat f(x) = \frac{1}{Nh^d} \sum_{k=1}^N K\left(\frac{X_i - x}{h}\right),$$
and $\tilde f$ be such that
\begin{eqnarray}
\tilde f(x) = \widehat f(x) \mathbf 1\left\{\|x\|_2 \leq \log(N)\right\}.\label{eqn:tildef}
\end{eqnarray}

\begin{theorem}\label{thm:dimd}
Assume that Assumptions~\ref{ass:ker} and~\ref{ass:dens} hold with $0<s \leq 2$, $C,C',C'',c,c',c''>0$, and $\varepsilon >0$. The estimate $\tilde f$ is such that with probability larger than $1-\delta$, for any point $x \in \mathbb R^d$,
\begin{align*}
\left|\tilde f(x) -  f(x)\right| &\leq 8d(v+1)(2\sqrt{cC^d}+c'' (1 + C') d+C^d) \left(\frac{\log(N/\delta)}{N}\right)^{\frac{s}{2s+d}}
 + c\exp\left(-c' \|x\|_2^2\right) \mathbf 1_{\|x\|_2 \geq \log(N)}\\
&\leq  H_1  \left(\frac{\log(N/\delta)}{N}\right)^{\frac{s}{2s+d}}
 + c\exp\left(-c' \|x\|_2^2\right) \mathbf 1_{\|x\|_2 \geq \log(N)}
\end{align*}
where $v = \log\left(1 + \frac{1}{c''+c}\right)\frac{2}{\min(1,s)}+ \frac{3}{\varepsilon} \log\left(1 + \frac{1}{C''c}\right)$, and $H_1$ is a constant that depends on $d, v, c, c'', C, C'$.
\end{theorem}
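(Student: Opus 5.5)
The plan is to mirror the proof of Theorem~\ref{thm:dimd2}: split the error into a bias term and a stochastic term at a fixed point, then make the bound uniform with a covering argument. The new ingredient is truncation: $\tilde f$ vanishes outside the ball $\{\|x\|_2\le \log N\}$. I would first treat $x$ with $\|x\|_2 > \log N$. There $\tilde f(x)=0$, so $|\tilde f(x)-f(x)| = f(x) \le c\exp(-c'\|x\|_2^2)$ by Assumption~\ref{ass:dens}. This gives exactly the second term of the bound, with nothing random involved.

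For $\|x\|_2\le \log N$ we have $\tilde f=\widehat f$, and the two pieces go as follows.

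\textbf{Bias.} $\mathbb E[\widehat f(x)] = h^{-d}\int K((y-x)/h) f(y)\,dy$. Substituting $y=x+hu$ and using the Taylor assumption, the linear term $\langle\nabla f(x),hu\rangle$ integrates to zero because $\int x K_0=0$. The same computation as in \eqref{eq:Bias1} then gives $|\mathbb E\widehat f(x)-f(x)|\le c''(1+C')dh^s$.

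\textbf{Variance.} Set $Y_i = K((X_i-x)/h)$ with $X_i\sim f$. Then $|Y_i|\le C^d$ and
\[
\mathbb E Y_i^2 \le c\int K^2((y-x)/h)\,dy \le cC^dh^d,
\]
since $f\le c$. Bernstein's inequality then yields
\[
|\widehat f(x)-\mathbb E\widehat f(x)| \le 2\sqrt{cC^d\log(1/\delta)/(Nh^d)} + 2C^d\log(1/\delta)/(Nh^d).
\]
Plugging in $h=(\log(N/\delta)/N)^{1/(2s+d)}$ (the exponent $2s+d$, as in the appendix proof) balances the two pieces at rate $(\log(N/\delta)/N)^{s/(2s+d)}$.

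\textbf{Uniformity.} I would take an $N^{-v}$-net of the ball of radius $\log N$, of cardinality at most $(2\log N\,\sqrt d\,N^v)^d$. A union bound over the net costs a factor $\log$ of the cardinality, which is $O(d(v+1)\log(N/\delta))$ after absorbing the $\log\log N$; this is where the prefactor $8d(v+1)$ comes from. To pass from net points to arbitrary $y$ in the ball, I would use the $\varepsilon$-H\"older property of $K_0$ for $\widehat f$ and the H\"older smoothness of $f$. With the stated choice of $v$, both fluctuations are at most $N^{-1}$, exactly as in the proof of Theorem~\ref{thm:dimd2}. The leftover $N^{-1}$ terms and the second-order Bernstein term are absorbed into the constant, which produces the $C^d$ inside the bracket, and the constants are collected into $H_1$.

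\textbf{Main obstacle.} I expect the net step to be the delicate one. The domain is now a ball of radius $\log N$ rather than $[0,A]^d$, so I must check that the extra $(\log N)^d$ in the cardinality only contributes $d\log\log N$, which is dominated by $d\log N$. I must also check that the Lipschitz/H\"older transfer still gives $N^{-1}$ with the prescribed $v$. The $h^{-d}$ factor must be controlled by $N^{d\varepsilon v}$, and since $h^{-d}\le N$ this holds once $v$ is large enough. One more subtlety is the boundary of the truncation. Points just inside radius $\log N$ are covered by the net restricted to the ball, so the net should be built on the closed ball.
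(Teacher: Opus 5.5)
Your proposal follows the paper's proof essentially step for step: the bias bound via the degree-$2$ kernel, Bernstein at a fixed point with $h=(\log(N/\delta)/N)^{1/(2s+d)}$, an $N^{-v}$-net of $B_d(\log N)$ with a union bound and H\"older transfer, and $|\tilde f-f|=f\le c\exp(-c'\|x\|_2^2)$ outside the ball where $\tilde f=0$ (you simply handle that last case first). The one step you should not take on faith is ``with the stated choice of $v$'', a weakness you inherit from the paper: for the product kernel the transfer bound is $|\widehat f(x)-\widehat f(y)|\le dC^{d-1}C''h^{-d-\varepsilon}N^{-\varepsilon v}$ (not $N^{-d\varepsilon v}$), and $|f(x)-f(y)|\le (c+c'')N^{-v}+c''N^{-vs}$ (using $\|\nabla f\|\le c+c''$ when $s>1$), so you need roughly $\varepsilon v>1+(d+\varepsilon)/(2s+d)$ and $v\min(1,s)>1$, which the displayed formula for $v$ does not guarantee (it tends to $0$ as $c$ grows), and you should instead take $v$ large enough in terms of $d,s,\varepsilon$, as your own remark about $h^{-d}$ already suggests.
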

Theorem~\ref{thm:dimd} provides a uniform bound on the error of $\tilde f$ on a large centered ball of radius $\log n$ denoted by $B_d\left(\log n)\right)$, and bounds the far fluctuations by an upper bound on $f$ itself. Note that the previous bound implies in particular that
$$\left|\tilde f(x) -  f(x)\right| \leq \cO\left( \left(\frac{\log(N/\delta)}{N}\right)^{\frac{s}{2s+d}}\right).$$

\subsection{Proof of Theorem~\ref{thm:dimd}}


By Assumptions~\ref{ass:ker} and~\ref{ass:dens}, similarly to the starting point of proof of Theorem \ref{thm:dimd2} (see Appendix \ref{app:proof}), we have that for any $x \in \mathbb R^d$
\begin{align}
\Big| \EE{\widehat f(x)}  - f(x) \Big| \leq c'' (1 + C') d h^{s}.
\label{eq:Bias2}
\end{align}

Let $0<\delta< \frac{1}{2}$. Note that since $K_0$ is non-negative and bounded by $C$, we have that $K \leq C^d$.
For $y \in \mathbb R^d$, let us write $Y_i = K\left(\frac{y - X_i}{h}\right)$, where $X_i \sim f$. This implies that $|Y_i| \leq C^d$. Moreover, since $f(x) \leq c \exp\left(-c'\|x\|_2^2\right) \leq c$, we have
\begin{align*}
\mathbb V(Y_i) &\leq \int_{\mathbb R^d} K^2\left(\frac{y - x}{h}\right) f(x) dx\\
&\leq c\int_{\mathbb R^d} K^2\left(\frac{y - x}{h}\right) dx\\
&\leq c h^d \int_{\mathbb R^d} K^2(u) du\\
&\leq cC^d h^d \int_{\mathbb R^d} K(u) du \\
&= cC^d h^d.
\end{align*}

%
Therefore, by Bernstein's inequality, for any $x \in \mathbb R^d$, we know that with probability larger than $1-\delta$ 
\begin{align}
\Big|  \EE{\widehat f(x)} - \widehat f(x)  \Big|
&= \left| \frac{1}{Nh^d} \sum_{k=0}^n \left(K\left(\frac{X_i - x}{h}\right) - \EE{K\left(\frac{X_i - x}{h}\right)} \right) \right|\nonumber\\
&\leq  \frac{1}{Nh^d} \Big(2 \sqrt{cC^d h^d N \log(1/\delta)} + 2 C^d \log(1/\delta)\Big)\nonumber\\
&\leq 2\sqrt{cC^d\frac{\log(1/\delta)}{Nh^d}} + 2 C^d \frac{\log(1/\delta)}{Nh^d},\label{eq:Variance2}
\end{align}
for $n$ large enough with respect to $\delta$.

By Equations~\ref{eq:Bias2} and \ref{eq:Variance2}, we thus know that for any $x \in \mathbb R^d$, with probability larger than $1-\delta$,
\begin{align}
\left|\widehat f(x) -  f(x)\right| & \leq  2\sqrt{cC^d\frac{\log(1/\delta)}{Nh^d}}
+ 2 C^d \frac{\log(1/\delta)}{Nh^d} + c'' (1 + C') d h^{s}.\label{eq:Risk2}
\end{align}
Therefore, for $h = h_s(\delta) = \left(\frac{\log(N/\delta)}{N}\right)^{\frac{1}{2s+d}}$, we get that
with probability larger than $1-\delta$,
\begin{align}
\left|\widehat f(x) -  f(x)\right| & \leq \left(2\sqrt{cC^d}+c'' (1 + C') d\right) \left(\frac{\log(N/\delta)}{N}\right)^{\frac{s}{2s+d}}
 + 2 C^d \left(\frac{\log(N/\delta)}{N}\right)^{\frac{2s}{2s+d}}.\label{aaaaa}
\end{align}

Now, since the $\Psi_2$ norm of $f$ is bounded by $c'$, we know that for any $x \in \mathbb R^d$,
\begin{equation*}
f(x) \leq c\exp\left(-c' \|x\|_2^2\right).
\end{equation*}

This implies in particular that for any $x$ such that $\|x\|_2 \geq \log(N)$, we have
\begin{equation}\label{outlogn2}
f(x) \leq c\exp\left(-c' \|x\|_2^2\right) \mathbf 1_{\|x\|_2 \geq \log N}.
\end{equation}

Let $\mathcal X$ be a $1/N^{v}$ covering set in $\|.\|_2$ norm and of minimal cardinality of the ball of $\mathbb R^d$ of center $0$ and radius $\log N$ that we will denote by $B_d(\log N)$. Its cardinality is at most $(2N^v\log N)^d\leq N^{d(v+1)}$ (by a similar reasoning as in the proof of Thm~\ref{thm:dimd2}). By a union bound and Equation~\ref{aaaaa}, it holds that with probability larger than $1-\delta$, for any $x \in \mathcal X$, we have
\begin{align}
\left|\widehat f(x) -  f(x)\right| &\leq 4d(v+1)\left(2\sqrt{cC^d}+c'' (1 + C') d\right) \left(\frac{\log(N/\delta)}{N}\right)^{\frac{s}{2s+d}}
+ 4d(v+1) C^d \left(\frac{\log(N/\delta)}{N}\right)^{\frac{2s}{2s+d}}\cdot\label{eq:eqRiskcov21}
\end{align}
Let $\xi$ be the event of probability larger than $1-\delta$ where this is satisfied.
Let $y \in B_d(\log N)$. Then, there exists $x \in \mathcal X$ such that $\|x - y\|_2\leq 1/N^{v}$. Since $K_0$ is $\varepsilon$-H\"older, we have  that
$$\left|\widehat f(x) - \widehat f(y)\right| \leq N(C'')^d\frac{c^d}{h_s(\delta)^d N^{d \varepsilon v}} \leq (C'')^dc^d\frac{N^{1 + \frac{d}{2s+1}}}{ N^{d \varepsilon v}},$$
and for $v > 3/\varepsilon \log(1 + 1/(C''c))$, we get
$$\left|\widehat f(x) - \widehat f(y)\right| \leq N^{-1}.$$
In the same way, by Assumption~\ref{ass:dens}, we also get that for $v \geq 2\log(1 + 1/(c''+c))/(\min(1,s))$,
$$\left| f(x) -  f(y)\right| \leq c'' N^{-v} + c N^{-vs} \leq N^{-1}.$$
This implies that
$$\left|\widehat f(y) - f(y)\right| \leq \left|\widehat f(x) - f(x)\right| +2/N,$$
which means that on $\xi$, for any $x \in B_d(\log N)$ we have
\begin{align*}
\left|\widehat f(x) -  f(x)\right| &\leq  4d(v+1) \left(2\sqrt{cC^d}+c'' (1 + C') d\right) \left(\frac{\log(N/\delta)}{N}\right)^{\frac{s}{2s+d}}
+ 4d(v+1) C^d \left(\frac{\log(N/\delta)}{N}\right)^{\frac{2s}{2s+d}}+ 2/N,
\end{align*}
where $v = \log\left(1 + \frac{1}{c''+c}\right)\frac{2}{\min(1,s)}+ \frac{3}{\varepsilon} \log\left(1 + \frac{1}{C''c}\right)$. Combining this with the definition of $\tilde f$, and Equation~\ref{outlogn2}, we get that for any $x \in \mathbb R^d$,
\begin{align*}
\Big| & \tilde f(x) -  f(x)\Big| \\
& \leq  4d(v+1)\left(2\sqrt{cC^d}+c'' (1 + C') d\right) \left(\frac{\log(N/\delta)}{N}\right)^{\frac{s}{2s+d}}
\!\!\!\! + 4d(v+1) C^d \left(\frac{\log(N/\delta)}{N}\right)^{\frac{2s}{2s+d}} \!\!\!\! + 2/N
+  f(x)\mathbf 1_{\|x\|_2 \geq \log N}\\
&\leq 8d(v+1)\left(2\sqrt{cC^d}+c'' (1 + C') d\right) \left(\frac{\log(N/\delta)}{N}\right)^{\frac{s}{2s+d}}
\!\!\!\! + 4d(v+1) C^d \left(\frac{\log(N/\delta)}{N}\right)^{\frac{2s}{2s+d}}+ f(x)\mathbf 1_{\|x\|_2 \geq \log N}\\
&\leq 8d(v+1)\left(2\sqrt{cC^d}+c'' (1 + C') d\right) \left(\frac{\log(N/\delta)}{N}\right)^{\frac{s}{2s+d}}
\!\!\!\!\!\! +4d(v+1) C^d \left(\frac{\log(N/\delta)}{N}\right)^{\frac{2s}{2s+d}}
\!\!\!\!\!\! + c\exp\left(-c' \|x\|_2^2\right) \mathbf 1_{\|x\|_2 \geq \log N},
\end{align*}
where $v = \log\left(1 + \frac{1}{c''+c}\right)\frac{2}{\min(1,s)}+ \frac{3}{\varepsilon} \log\left(1 + \frac{1}{C''c}\right)$

\subsection{Extended pliable rejection sampling}

Our modified algorithm, \textit{\textbf{e}xtended \textbf{p}liable \textbf{r}ejection \textbf{s}ampling (\EPRS)}, aims at sampling as many i.i.d.~points distributed according to $f$ as possible with a fixed budget of evaluations of $f$. It consists of three main steps: (i) a first rejection sampling step where it generates $\widehat N$ initial samples from $f$ by rejection sampling using an \textit{initial proposal}. Then, (ii)  \EPRS uses these samples to estimate $f$ by a kernel density estimation method.  Finally, (iii) \EPRS uses the newly obtained estimate, plus a uniform bound on it, as a new \textit{extended pliable proposal} for rejection sampling. Since this pliable proposal is closer to the target density than the initial proposal, the rejection sampling will reject significantly fewer points by using it. Our \EPRS method is described as Algorithm~\ref{alg:1}.

As mentioned, our method makes use of an \textit{initial proposal} density $g$ that must satisfy the following properties with respect to the target density.
\begin{assumption}[Assumption on the initial proposal]\label{ass:ini}
Let $M>0$. We have
$$f \leq Mg.$$%

Furthermore, the density $g$ is sub-Gaussian, i.e.,~there exist constants $a,a' > 0$ such that the density $g$ defined on $\mathbb R^d$ satisfies
$$g(x) \leq a \exp\left(-a' \|x\|_2^2\right).$$
\end{assumption}

We set the following constants:
\[T_s \eqdef n^{\frac{2s+d}{3s+d}}, \quad \mathrm{and} \quad \bar N \eqdef T_s/M - 2\sqrt{T_s \log(1/\delta)}.\]
$\bar N$ is actually a high probability lower bound of $\widehat N$ given by our algorithm (it is the number of samples obtained by the initial rejection sampling step). $T_s$ is the number of calls needed for the first estimation step that will optimize the number of accepted samples in the second step.

We also define
$$r_{\bar N} \eqdef \mathcal V_n H_E \left(\frac{\log(\bar N/\delta)}{\bar N}\right)^{\frac{s}{2s+d}}, \quad \mathrm{and}$$ $$ \bar g_{\bar N} \eqdef \int_{B_d(\log(\bar N))^C}g(x) dx,$$
where $\mathcal V_n = \mathcal V\left(B_d\left(\log\left(n\right)\right)\right)$ is the volume of $B_d\left(\log n \right)$ the centered ball in $\mathbb R^d$ of radius $\log n$ and where $H$ is a parameter of the algorithm.


Our method samples most of the samples by rejection sampling according to a \textit{pliable proposal} that is defined as
$$\hat g^\star \eqdef \frac{1}{1 + r_{\bar N} + M\bar g_{\bar N}} \left(\tilde f + r_{\bar N} \mathcal U_{B_d(\log(n))} + Mg \mathbf 1_{\|x\|_2 \geq \log(\bar N)}\right),$$
where $\mathcal U_{B_d(\log(n))}$ is the uniform distribution on $B_d(\log(n))$, and $\tilde f$ the estimate of $f$ defined in \eqref{eqn:tildef}.

\begin{algorithm}[ht]
\begin{algorithmic}
\caption{Extended pliable rejection sampling (\EPRS)}
 \label{alg:1}
\STATE {\bf Parameters:} $s$, $n$, $\delta$, $H$ $g$, and $M$.

\STATE \textbf{Initial sampling}

\STATE \quad Draw $T_s$ samples at random according to $g$, and evaluate $f$ on them

\STATE \textbf{Estimation of $f$}

\STATE \quad Perform rejection sampling on the samples using $M$ as the constant
\STATE \quad  Obtain this way $\widehat N$ samples from $f$
\STATE \quad Estimate $f$ by $\tilde f$ on these $\widehat N$ samples (Section~\ref{ss:estf})

\STATE \textbf{Generating the sample}

\STATE \quad Sample $n- T_s$ samples from the \textit{pliable proposal} $\hat g^\star$

\STATE \quad Perform rejection sampling on these samples using $1 + r_{\bar N}/\mathcal V_n +M \bar g_{\bar N}$ as a constant
\STATE \quad  Obtain this way $\widehat n$ samples from $f$

\STATE \textbf{Output:} Return the $\widehat n$ samples.
\end{algorithmic}
\end{algorithm}

\begin{theorem}\label{thm:prs}
Assume that Assumptions~\ref{ass:ker}, \ref{ass:dens}, and~\ref{ass:ini} hold with $0<s \leq 2$, $g$, $M>0$, and that $H_E >0$ is an upper bound on the constant $H_1$ defined in Theorem~\ref{thm:dimd} (applied to $f$ and $\tilde f$). The number $\widehat n$ of samples generated in an i.i.d.~way according to $f$ in this way is such that for $n$ large enough, with probability larger than $1-\delta$,
$$\widehat n \geq n\left[1 -  \cO\left(\log\left(n/\delta\right)^{d+1}n^{-\frac{s}{3s+d}}\right)\right].$$
\end{theorem}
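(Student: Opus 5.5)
The plan mirrors the proof of Theorem~\ref{thm:prs2}, with a first rejection step that produces the kernel-estimation samples and a tail term for the unbounded support.

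\emph{Step 1: the initial phase.} Each of the $T_s$ proposals from $g$ is accepted independently with probability $1/M$, since $f$ is normalized and $f\le Mg$ by Assumption~\ref{ass:ini}. Hoeffding's inequality then gives $\widehat N\ge \bar N$ with probability at least $1-\delta$. Conditionally on $\widehat N$, the accepted points are i.i.d.\ from $f$. We can therefore apply Theorem~\ref{thm:dimd} to $\tilde f$ with $\widehat N\ge\bar N$ samples. Because the rate is monotone in the sample size, and $H_E\ge H_1$, we obtain with probability $1-\delta$ a bound valid for every $x\in\mathbb R^d$:
\[|\tilde f(x)-f(x)|\le r_{\bar N}/\mathcal V_n + c\exp(-c'\|x\|_2^2)\mathbf 1_{\|x\|_2\ge\log \bar N}.\]
Call $\xi$ the intersection of these events; it has probability at least $1-2\delta$.

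\emph{Step 2: validity of the envelope on $\xi$.} We check pointwise that $f\le \tilde f + r_{\bar N}\mathcal U_{B_d(\log n)} + Mg\mathbf 1_{\|x\|_2\ge\log\bar N}$, splitting into two regions.
\begin{itemize}
\item Inside $B_d(\log\bar N)\subset B_d(\log n)$, the uniform density equals $1/\mathcal V_n$. Step 1 gives $f\le\tilde f+r_{\bar N}/\mathcal V_n$ there.
\item Outside $B_d(\log\bar N)$, we use $f\le Mg$ directly. Here I would simply bound $f$ by the $Mg$ tail term rather than through $\tilde f$, which is nonnegative in any case.
\end{itemize}
The total mass of the right-hand side is at most $\int\tilde f + r_{\bar N} + M\bar g_{\bar N}$. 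The one delicate point is that $\int\tilde f\le 1$, because $\widehat f$ is a density (the kernel integrates to $1$) and truncation only lowers the integral. Hence $f\le Z\,\hat g^\star$, where $Z$ is the algorithm's rejection constant, interpreted as $1+r_{\bar N}+M\bar g_{\bar N}$ after normalizing the uniform term. So on $\xi$ the accepted samples are exactly i.i.d.\ from $f$, and each proposal is accepted with probability $1/Z\ge 1-r_{\bar N}-M\bar g_{\bar N}$.

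\emph{Step 3: counting.} Apply Hoeffding once more to the $n-T_s$ Bernoulli acceptances. With probability at least $1-\delta$,
\[\widehat n\ge (n-T_s)(1-r_{\bar N}-M\bar g_{\bar N})-2\sqrt{n\log(1/\delta)}.\]
Then plug in the orders of each term:
\begin{itemize}
\item $T_s/n=n^{-s/(3s+d)}$.
\item $\bar N\asymp T_s/M$ for $n$ large.
\item $\mathcal V_n\asymp(\log n)^d$.
\end{itemize}
These give
\[r_{\bar N}=\mathcal O\bigl((\log n)^d(\log(n/\delta)/T_s)^{s/(2s+d)}\bigr)=\mathcal O\bigl(\log(n/\delta)^{d+1}n^{-s/(3s+d)}\bigr),\]
using $T_s^{s/(2s+d)}=n^{s/(3s+d)}$. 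The sub-Gaussian tail of $g$ gives $\bar g_{\bar N}\le a'' \exp(-a'\log^2\bar N)$, which is superpolynomially small. Finally, $\sqrt{\log(1/\delta)/n}$ is of lower order. A union bound then yields the claim with probability $1-\mathcal O(\delta)$, which we rescale to $1-\delta$.

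The main obstacle is Step 2, specifically the bookkeeping of the normalization. The uniform term enters as $r_{\bar N}\mathcal U$, but the constant in the algorithm is written as $1+r_{\bar N}/\mathcal V_n+M\bar g_{\bar N}$. I must also ensure $\int \tilde f\le1$, and confirm that the region $\log\bar N\le\|x\|_2\le\log n$ is covered by the $Mg$ term. I would first fix a consistent convention, namely $r_{\bar N}\mathcal U$ with total mass $r_{\bar N}$, and verify the envelope region by region as above.
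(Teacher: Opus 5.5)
Your proposal is correct and follows the paper's proof essentially step for step: Hoeffding to get $\widehat N\ge\bar N$, Theorem~\ref{thm:dimd} applied conditionally on the accepted points and transferred from $\widehat N$ to $\bar N$, the envelope check using $Mg$ on $\{\|x\|_2\ge\log\bar N\}$, Hoeffding for $\widehat n$, and the substitution $T_s=n^{\frac{2s+d}{3s+d}}$. Your explicit handling of $\int\tilde f\le 1$, and of the mismatch between the normalizer $1+r_{\bar N}+M\bar g_{\bar N}$ of $\hat g^\star$ and the constant $1+r_{\bar N}/\mathcal V_n+M\bar g_{\bar N}$ written in Algorithm~\ref{alg:1}, tidies up bookkeeping the paper glosses over (one harmless nit: for $d\ge 3$ the tail integral defining $\bar g_{\bar N}$ carries an extra factor $(\log\bar N)^{d-2}$, so state that bound with a constant slightly smaller than $a'$ in the exponent; it remains superpolynomially small).
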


\subsection{Proof of Theorem~\ref{thm:prs}}

By definition of $\bar g_{\bar N}$ and since the $g$ is sub-Gaussian by Assumption~\ref{ass:dens}, we have that

\begin{align*}
\bar g_{\bar N} &= \int_{B_d(\log \bar N)^C}g(x) dx 
 \leq \int_{B_d(\log \bar N)^C}a \exp(-a' \|x\|_2^2) dx 
 \leq  a(\bar N)^{-da' (\log\bar N)/4} \leq \bar N^{-1},\label{eq:g}
\end{align*}
for $n$ (and thus $\bar N$) large enough.


By definition of rejection sampling, the probability of accepting a sample is exactly $1/M$, where $M$ is the upper bound used in the rejection sampling (provided that $f \leq Mg$).
Therefore, and since the first rejection sampling step uses $T_s$ samples, $\widehat N$ is a sum of $T_s$ independent Bernoulli random variables of parameter $1/M$. Thus, we have with probability larger than $1-\delta$ that
\begin{equation}
  \widehat N \geq T_s/M - 2\sqrt{T_s \log(1/\delta)} \eqdef \bar N.
\label{eq:sumBernoulli}
\end{equation}
Let us write $\xi'$ for the event where this happens.
On $\xi'$, we have by Theorem~\ref{thm:dimd} (end of the proof) that with probability larger than $1-\delta$, for any $x \in \mathbb R^d$
$$\left|\tilde f(x) - f(x)\right| \leq \frac{r_{\widehat N}}{ \mathcal V_n} +  f \mathbf 1_{\|x\|_2 \geq \log \widehat N} \leq \frac{r_{\bar N}}{ \mathcal V_n} +   M g \mathbf 1_{\|x\|_2 \geq \log \bar N}.$$

Let $\xi$ be the intersection of $\xi'$ and the event where Equation \ref{eq:sumBernoulli} holds. It has probability larger than $1-2\delta$. 
On $\xi$, we thus have that
$$ \tilde f + r_{\bar N} \mathcal U_{B_d(\log n)} +  Mg \mathbf 1_{\|x\|_2 \geq \log \bar N} \geq f.$$

Therefore, the rejection sampling is going to provide samples that are i.i.d.~according to $f$, and $\widehat n$ will be a sum of Bernoulli random variables of parameter $\frac{1}{1 + r_{\bar N} + M\bar g_{\bar N}}$. We thus have that on $\xi$, with probability larger than $1-\delta$,
$$\widehat n \geq (n-T_s)\frac{1}{1 + r_{\bar N} + M\bar g_{\bar N}} - 2 \sqrt{n\log(1/\delta}).$$

This implies, together with the definition of $r_{\bar N}$ and the upper bound on $\bar g_{\bar N}$, that $\widehat n$ is with probability larger than $1-3\delta$ lower bounded as
\begin{align}
\widehat n & \geq  (n - T_s) \left(1 -  \pi^d\log\left(\bar N\right)^dH \left(\frac{\log(\bar N/\delta)}{\bar N}\right)^{\frac{s}{2s+d}} -  M\bar N^{-1} \right) - 2\sqrt{n\log(1/\delta}) \nonumber\\
& \geq n\left[1 - \frac{T_s}{n} - \pi^d\log\left(n\right)^d H \left(\frac{\log(n/\delta)}{T_s/M - 2\sqrt{T_s \log(1/\delta)}}\right)^{\frac{s}{2s+d}} - \frac{M}{T_s/M - 2\sqrt{T_s \log(1/\delta)}} -  4\sqrt{\frac{\log(1/\delta)}{n}}\right] \nonumber.
\end{align}

Since
$$T_s = n^{\frac{2s+d}{3s+d}},$$
we have that with probability larger than $1-3\delta$, for $n$ large enough, there exists a constant $K$ such that
\begin{align}
\widehat n \geq n\left[1 - K\log(n/\delta)^{d+1}n^{-\frac{s}{3s+d}}. \right]
\end{align}